\title{Towards Understanding Grokking:\\
An Effective Theory of Representation Learning}
\author{%
  Ziming Liu, Ouail Kitouni, Niklas Nolte, Eric J. Michaud, Max Tegmark, Mike Williams \\ 
  Department of Physics, Institute for AI and Fundamental Interactions, MIT \\
  \texttt{\{zmliu,kitouni,nnolte,ericjm,tegmark,mwill\}@mit.edu} \\
}
\setlist{leftmargin=10mm}
\newcommand{\mat}[1]{\mathbf{#1}}
\newtheorem{definition}{Definition}
\newtheorem{proposition}{Proposition}
\newcommand{\todo}[1]{\textcolor{blue}{#1}}
\def\spose#1{\hbox to 0pt{#1\hss}}
\def\simlt{\mathrel{\spose{\lower 3pt\hbox{$\mathchar"218$}}
     \raise 2.0pt\hbox{$\mathchar"13C$}}}
\def\simgt{\mathrel{\spose{\lower 3pt\hbox{$\mathchar"218$}}
     \raise 2.0pt\hbox{$\mathchar"13E$}}}
\def\simpropto{\mathrel{\spose{\lower 3pt\hbox{$\mathchar"218$}}
     \raise 2.0pt\hbox{$\propto$}}}
\def\beq#1{\begin{equation}\label{#1}}
\def\eeq{\end{equation}}
\def\beqa#1{\begin{eqnarray}\label{#1}}
\def\eeqa{\end{eqnarray}}
\begin{document}

\maketitle

\begin{abstract}
    We aim to understand \emph{grokking}, a phenomenon where models generalize long after overfitting their training set.
    We present both a \textit{microscopic} analysis anchored by an effective theory and a \textit{macroscopic} analysis of phase diagrams describing learning performance across hyperparameters. We find that generalization originates from structured representations whose training dynamics and dependence on training set size can be predicted by our effective theory in a toy setting. We observe empirically the presence of four learning phases: \textit{comprehension}, \textit{grokking}, \textit{memorization}, and \textit{confusion}. We find representation learning to occur only in a ``Goldilocks zone'' (including comprehension and grokking) between memorization and confusion. We find on transformers the grokking phase stays closer to the memorization phase (compared to the comprehension phase), leading to delayed generalization. The Goldilocks phase is reminiscent of ``intelligence from starvation'' in Darwinian evolution, where resource limitations drive discovery of more efficient solutions. This study not only provides intuitive explanations of the origin of grokking, but also highlights the usefulness of physics-inspired tools, e.g., effective theories and phase diagrams, for understanding deep learning.
\end{abstract}

\section{Introduction}

Perhaps \emph{the} central challenge of a scientific understanding of deep learning lies in accounting for neural network generalization. 
Power et al.~\cite{power2022grokking} recently added a new puzzle to the task of understanding generalization with their discovery of \emph{grokking}. Grokking refers to the surprising phenomenon of \textit{delayed generalization} where neural networks, on certain learning problems, generalize long after overfitting their training set. It is a rare albeit striking phenomenon that violates common machine learning intuitions, raising three key puzzles:

\begin{itemize}
  \item[\bf Q1]\textit{The origin of generalization}: When trained on the algorithmic datasets where grokking occurs, how do models generalize at all? 
  \item[\bf Q2]\textit{The critical training size}: Why does the training time needed to ``grok'' (generalize) diverge as the training set size decreases toward a critical point?
  \item[\bf Q3]\textit{Delayed generalization}: Under what conditions does delayed generalization occur? 
\end{itemize}

We provide evidence that representation learning is central to answering each of these questions. Our answers can be summarized as follows:

\begin{itemize}
  \item[\bf A1] Generalization can be attributed to learning a good representation of the input embeddings, i.e., a representation that has the appropriate structure for the task and which can be predicted from the theory in Section \ref{eq:RQI}. See Figures~\ref{fig:ring} and \ref{fig:toys}.
  \item[\bf A2] The critical training set size corresponds to the least amount of training data that can determine such a representation (which, in some cases, is unique up to linear transformations).
  \item[\bf A3] Grokking is a phase between ``comprehension'' and  ``memorization'' phases and it can be remedied with proper hyperparmeter tuning, as illustrated by the phase diagrams in Figure~\ref{fig:grokking_pd}.   
\end{itemize}

\begin{figure}[t]
    \centering
    \includegraphics[width=1\linewidth]{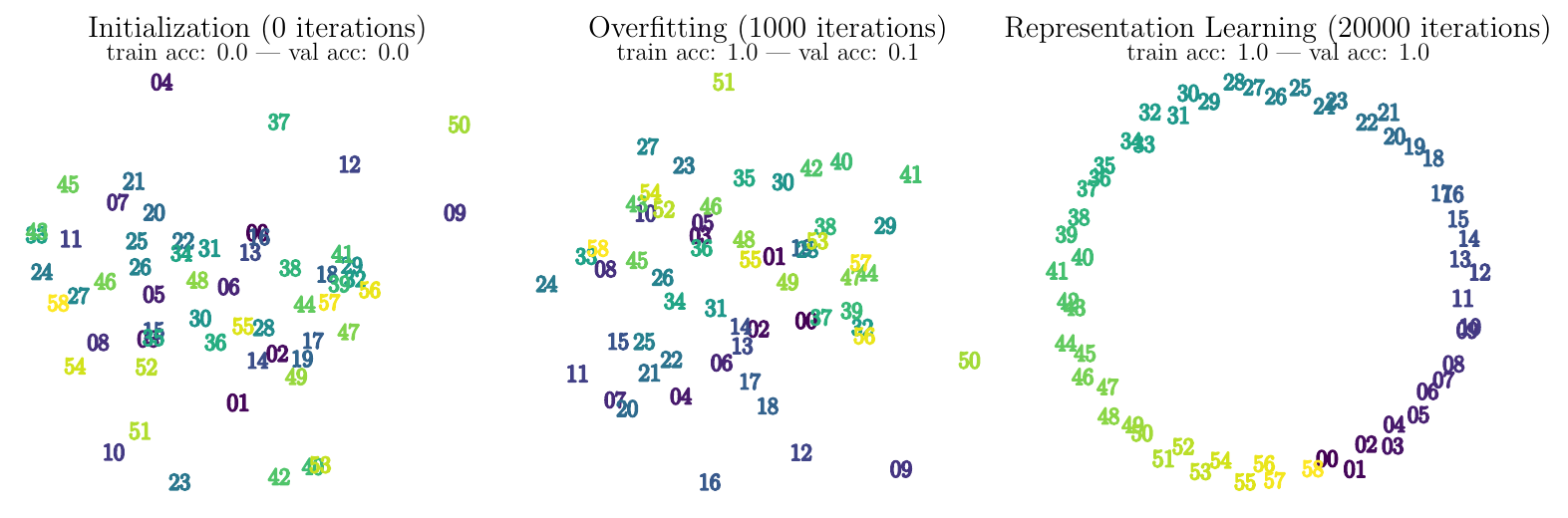}
    \caption{Visualization of the first two principal components of the learned input embeddings at different training stages of a transformer learning modular addition. We observe that generalization coincides with the emergence of structure in the embeddings. See Section \ref{sec:beyond_toy} for the training details.}
    \label{fig:ring}
\end{figure}

This paper is organized as follows: In Section \ref{sec:problem_setting}, we introduce the problem setting and build a simplified toy model. In Section \ref{sec:represenation}, we will use an \textit{effective theory} approach, a useful tool from theoretical physics, to shed some light on questions {\bf Q1} and {\bf Q2} and show the relationship between generalization and the learning of structured representations. 
In Section \ref{sec:phase_diagram}, we explain {\bf Q3} by displaying phase diagrams from a grid search of hyperparameters and show how we can ``de-delay'' generalization by following intuition developed from the phase diagram. We discuss related work in Section \ref{sec:related_works}, followed by conclusions in Section \ref{sec:conclusions}.\footnote{Project code can be found at: 
\url{https://github.com/ejmichaud/grokking-squared}
}

\section{Problem Setting}\label{sec:problem_setting}

Power et al.~\cite{power2022grokking} observe grokking on a less common task -- learning ``algorithmic'' binary operations.
Given some binary operation $\circ$, a network is tasked with learning the map $(a, b) \mapsto c$ where $c = a \circ b$.
They use a decoder-only transformer to predict the second to last token in a tokenized equation of the form ``<lhs> <op> <rhs> <eq> \textbf{<result>} <eos>''.
Each token is represented as a 256-dimensional embedding vector.
The embeddings are learnable and initialized randomly.
After the transformer, a final linear layer maps the output to class logits for each token.

{\bf Toy Model} We primarily study grokking in a simpler toy model, which still retains the key behaviors from the setup of~\cite{power2022grokking}.
Although~\cite{power2022grokking} treated this as a classification task, we study both regression (mean-squared error) and classification (cross-entropy). The basic setup is as follows:
our model takes as input the symbols $a, b$ and maps them to trainable embedding vectors $ \mathbf{E}_a, \mathbf{E}_b \in \mathbb{R}^{d_{\text{in}}}$.
It then sums $ \mathbf{E}_a, \mathbf{E}_b$ and sends the resulting vector through a ``decoder'' MLP. The target output vector, denoted $\mathbf{Y}_c \in \mathbb{R}^{d_{\text{out}}}$
is a fixed random vector (regression task) or a one-hot vector (classification task). Our model architecture can therefore be compactly described as $(a, b) \mapsto {\rm Dec}(\mathbf{E}_a + \mathbf{E}_b)$, where the embeddings $\mathbf{E}_*$ and the decoder are trainable. Despite its simplicity, this toy model can generalize to all abelian groups (discussed in Appendix \ref{app:applicability}).
In sections \ref{sec:represenation}-\ref{sec:toyphases}, we consider only the binary operation of addition. We consider modular addition in Section~\ref{sec:beyond_toy} to generalize some of our results to a transformer architecture and study general non-abelian operations in Appendix \ref{app:non-abelian}.

{\bf Dataset} In our toy setting, we are concerned with learning the addition operation. A data sample corresponding to $i+j$ is denoted as $(i,j)$ for simplicity. If $i, j \in \{0, \ldots, p-1\}$, there are in total $p(p+1)/2$ different samples since we consider $i+j$ and $j+i$ to be the same sample. A dataset $D$ is a set of non-repeating data samples. We denote the full dataset as $D_0$ and split it into a training dataset $D$ and a validation dataset $D'$, i.e., $D\bigcup D'=D_0$, $D\bigcap D'=\emptyset$. We define \textit{training data fraction} $=|D|/|D_0|$ where $|\cdot|$ denotes the cardinality of the set.

\section{Why Generalization Occurs: Representations and Dynamics}\label{sec:represenation}

We can see that generalization appears to be linked to the emergence of highly-structured embeddings in Figure~\ref{fig:toys}. In particular, Figure~\ref{fig:toys} (a, b) shows parallelograms in toy addition, and (c, d) shows a circle in toy modular addition. We now restrict ourselves to the toy addition setup and formalize a notion of representation quality and show that it predicts the model's performance. We then develop a physics-inspired \emph{effective} theory of learning which can accurately predict the critical training set size and training trajectories of representations. The concept of an effective theory in physics is similar to model reduction in computational methods in that it aims to describe complex phenomena 
with simple yet intuitive pictures.
In our effective theory, we will model the dynamics of representation learning not as gradient descent of the true task loss but rather a simpler effective loss function $\ell_{\text{eff}}$ which depends only on the representations in embedding space and not on the decoder. 




\begin{figure}[t]
    \begin{subfigure}{.5\textwidth}
      \centering
      \includegraphics[width=.8\linewidth]{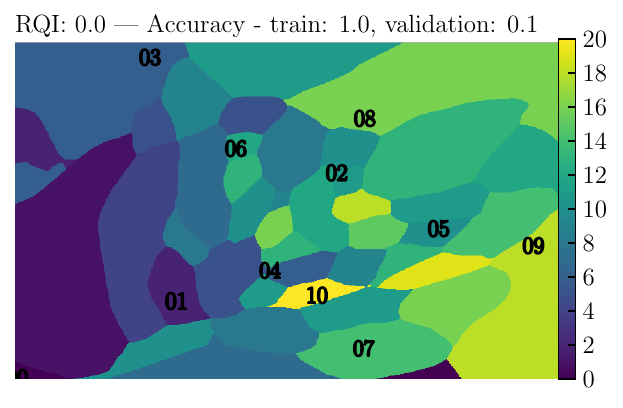}
      \vspace{-10pt}%
      \caption{Memorization in toy addition}
      \label{fig:Add2DBad}
    \end{subfigure}
    \begin{subfigure}{.5\textwidth}
      \centering
      \includegraphics[width=.8\linewidth]{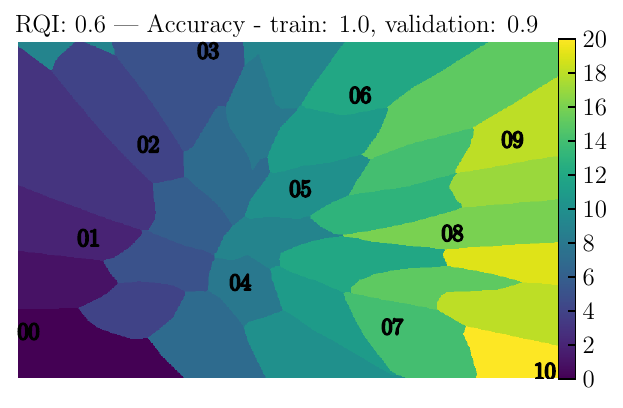}
      \vspace{-10pt}%
      \caption{Generalization in toy addition}
      \label{fig:Add2DGood}
    \end{subfigure}%
    
    \begin{subfigure}{.5\textwidth}
      \centering
      \includegraphics[width=.8\linewidth]{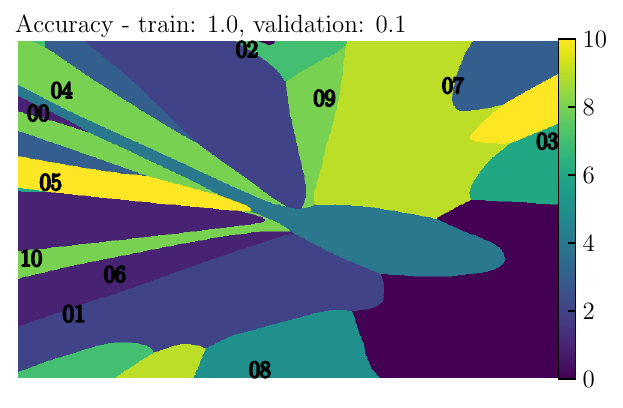}
      \vspace{-10pt}%
      \caption{Memorization in toy modular addition}
      \label{fig:ModAdd2DBad}
    \end{subfigure}
    \begin{subfigure}{.5\textwidth}
      \centering
      \includegraphics[width=.8\linewidth]{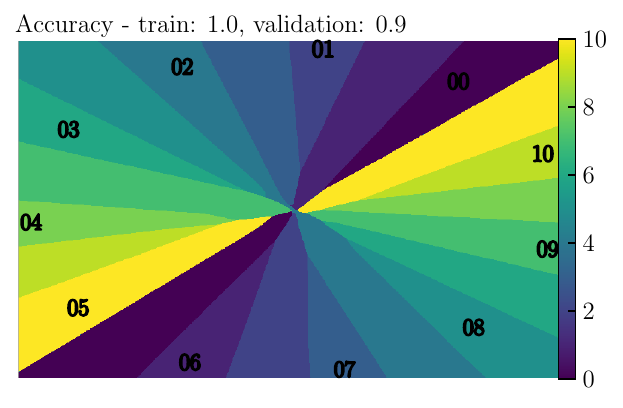}
      \vspace{-10pt}%
      \caption{Generalization in toy modular addition}
      \label{fig:ModAdd2DGood}
    \end{subfigure}%
    
    \caption{Visualization of the learned set of embeddings ($p=11$) and the decoder function associated with it for the case of 2D embeddings. 
    Axes refer to each dimension of the learned embeddings. The decoder is evaluated on a grid of points in embedding-space and the color at each point represents the highest probability class. For visualization purposes, the decoder is trained on inputs of the form $(\mathbf{E}_i + \mathbf{E}_j)/2$. One can read off the output of the decoder when fed the operation $i \circ j$ from this figure simply by taking the midpoint between the respective embeddings of $i$ and $j$. 
    }
    \label{fig:toys}
\end{figure}

\subsection{Representation quality predicts generalization for the toy model}
\label{sec:RQI}
A rigorous definition for \textit{structure} in the learned representation is necessary. We propose the following definition,
\begin{definition}
 $(i,j,m,n)$ is a $\delta$-\textbf{parallelogram} in the representation $\mathbf{R}\equiv[\mathbf{E}_0,\cdots,\mathbf{E}_{p-1}]$ if $$|(\mathbf{E}_i+\mathbf{E}_j)-(\mathbf{E}_m+\mathbf{E}_n)|\leq\delta.$$
\end{definition}
In the following derivations, we can take $\delta$, which is a small threshold to tolerate numerical errors, to be zero.

\begin{proposition}
    When the training loss is zero, any parallelogram $(i,j,m,n)$ in representation $\mathbf{R}$ satisfies $i+j=m+n$. 
\end{proposition}

\begin{proof}
    Suppose that this is not the case, i.e., suppose $\mathbf{E}_i + \mathbf{E}_j = \mathbf{E}_m + \mathbf{E}_n$ but $i +j \neq m + n$, then $\mathbf{Y}_{i+j} = \mathrm{Dec}(\mathbf{E}_i + \mathbf{E}_j) = \mathrm{Dec}(\mathbf{E}_m + \mathbf{E}_n) = \mathbf{Y}_{m+n}$ where the first and last equalities come from the zero training loss assumption. However, since $i + j \neq m + n$, we have $\mathbf{Y}_{i+j} \neq \mathbf{Y}_{n+m}$ (almost surely in the regression task), a contradiction.
\end{proof}

It is convenient to define the permissible parallelogram set associated with a training dataset $D$ (``permissible'' means consistent with 100\% training accuracy) as
\begin{equation}\label{eq:P0D}
    P_0(D) = \{(i,j,m,n)|(i,j)\in D,\,  (m,n)\in D,\,  i+j = m+n\}.
\end{equation}
For simplicity, we denote $P_0\equiv P_0(D_0)$. Given a representation $\mathbf{R}$, we can check how many permissible parallelograms actually exist in $\mathbf{R}$ within error $\delta$, so we define the parallelogram set corresponding to $\mathbf{R}$ as
\begin{equation}\label{eq:PR}
    P(\mathbf{R},\delta) = \{(i,j,m,n)|(i,j,m,n)\in P_0, |(\mathbf{E}_i+\mathbf{E}_j)-(\mathbf{E}_m+\mathbf{E}_n)|\leq\delta\}.
\end{equation}
For brevity we will write $P(\mathbf{R})$, suppressing the dependence on $\delta$. We define the representation quality index (RQI) as
\begin{equation}\label{eq:RQI}
    {\rm RQI}(\mathbf{R}) = \frac{|P(\mathbf{R})|}{|P_0|} \in [0, 1].
\end{equation}
We will use the term \textit{linear representation} or \textit{linear structure} to refer to a representation whose embeddings are of the form $\mathbf{E}_k=\mathbf{a}+k\mathbf{b} \ (k=0,\cdots,p-1;\mathbf{a},\mathbf{b}\in\mathbb{R}^{d_\text{in}})$. A linear representation has ${\rm RQI}=1$, while a random representation (sampled from, say, a normal dstribution) has ${\rm RQI}=0$ with high probability.

Quantitatively, we denote the ``predicted accuracy'' $\widehat{{\rm Acc}}$ as the accuracy achievable on the whole dataset given the representation $\mathbf{R}$ (see Appendix~\ref{app:acc} for the full details). In Figure~\ref{fig:Acc_DP}, we see that the predicted $\widehat{{\rm Acc}}$ aligns well with the true accuracy ${\rm Acc}$, establishing good evidence that structured representation of input embeddings leads to generalization. We use an example to illustrate the origin of generalization here. In the setup of Figure~\ref{fig:toys}~(b), suppose the decoder can achieve zero training loss and $\mat{E}_6+\mat{E}_8$ is a training sample hence ${\rm Dec}(\mat{E}_6+\mat{E}_8)=\mat{Y}_{14}$. At validation time, the decoder is tasked with predicting a validation sample $\mat{E}_5+\mat{E}_9$. Since $(5,9,6,8)$ forms a parallelogram such that $\mat{E}_5+\mat{E}_9=\mat{E}_6+\mat{E}_8$, the decoder can predict the validation sample correctly because ${\rm Dec}(\mat{E}_5+\mat{E}_9)={\rm Dec}(\mat{E}_6+\mat{E}_8)=\mat{Y}_{14}$.

\begin{figure}
    \centering
    \includegraphics[width=\linewidth]{./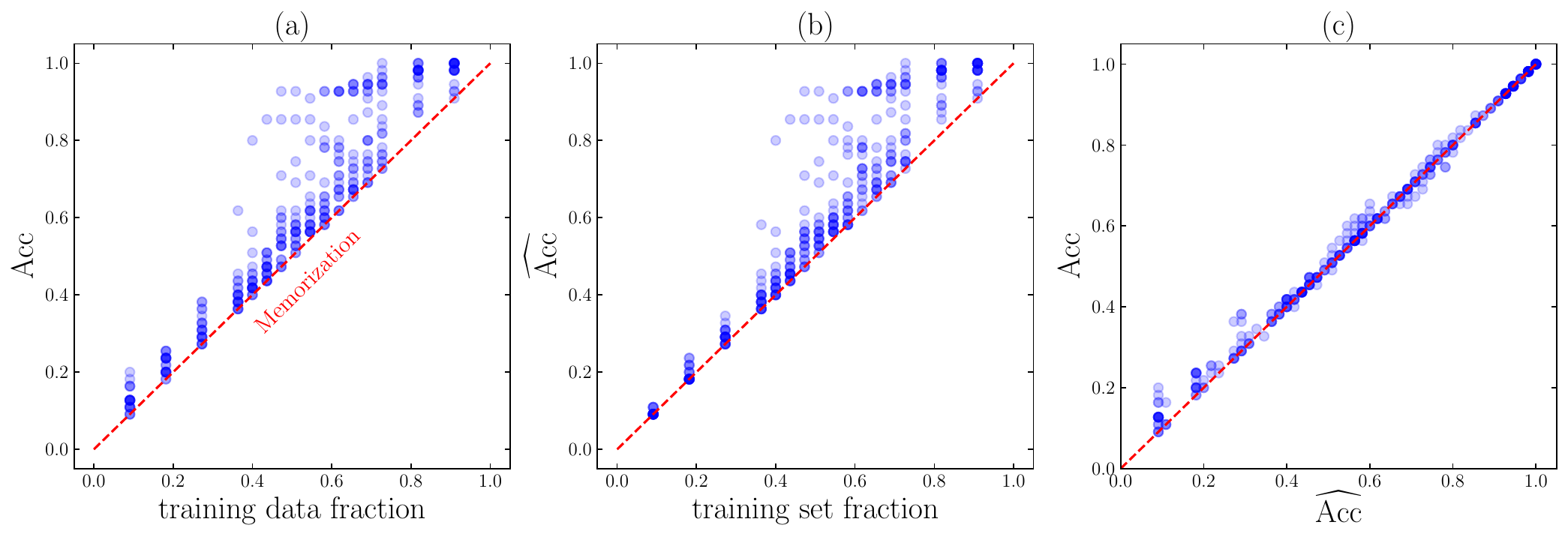}
    \caption{We compute accuracy (of the full dataset) either measured empirically ${\rm Acc}$, or predicted from the representation of the embeddings $\widehat{{\rm Acc}}$. These two accuracies as a function of training data fraction are plotted in (a)(b), and their agreement is shown in (c).}
    \label{fig:Acc_DP}
\end{figure}

\subsection{The dynamics of embedding vectors}\label{sec:effective_theory}

Suppose that we have an ideal model $\mathcal{M}^* = ({\rm Dec}^*, \mathbf{R}^*)$ such that:\footnote{One can verify a posteriori if a trained model $\mathcal{M}$ is close to being an ideal model $\mathcal{M}^*$. Please refer to Appendix \ref{app:ideal-gap} for details.}

\begin{itemize}
    \item (1) $\mathcal{M}^*$ can achieve zero training loss;
    \item (2) $\mathcal{M}^*$ has an injective decoder, i.e., ${\rm Dec^*}(\mathbf{x}_1)\neq {\rm Dec^*}(\mathbf{x}_2)$ for any $\mathbf{x}_1\neq \mathbf{x}_2$. 
\end{itemize}

Then Proposition~\ref{prop:train_parallel} provides a mechanism for the formation of parallelograms.

\begin{proposition}
\label{prop:train_parallel}
If a training set $D$ contains two samples $(i,j)$ and $(m,n)$ with $i+j=m+n$, then $\mathcal{M}^*$ learns a representation $\mathbf{R}^*$ such that $\mathbf{E}_i+\mathbf{E}_j=\mathbf{E}_m+\mathbf{E}_n$, i.e., $(i,j,m,n)$ forms a parallelogram.
\end{proposition}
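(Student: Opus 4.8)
The plan is to chain together the two defining properties of the ideal model $\mathcal{M}^* = ({\rm Dec}^*, \mathbf{R}^*)$; no auxiliary construction is needed. First I would unpack the zero-training-loss hypothesis (property (1)) pointwise. Since both $(i,j)$ and $(m,n)$ lie in the training set $D$, and the training loss (mean-squared error in the regression task, cross-entropy against a one-hot target in the classification task) is minimized to its vanishing value only when each prediction equals its target, we obtain ${\rm Dec}^*(\mathbf{E}_i+\mathbf{E}_j)=\mathbf{Y}_{i+j}$ and ${\rm Dec}^*(\mathbf{E}_m+\mathbf{E}_n)=\mathbf{Y}_{m+n}$.

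Next I would invoke the hypothesis $i+j=m+n$. Because the target vector $\mathbf{Y}_c$ depends only on the \emph{value} $c$ of the sum and not on how it is decomposed, this forces $\mathbf{Y}_{i+j}=\mathbf{Y}_{m+n}$, and hence ${\rm Dec}^*(\mathbf{E}_i+\mathbf{E}_j)={\rm Dec}^*(\mathbf{E}_m+\mathbf{E}_n)$. Finally I would apply property (2) in its contrapositive form: the decoder ${\rm Dec}^*$ being injective means equal outputs force equal inputs, so $\mathbf{E}_i+\mathbf{E}_j=\mathbf{E}_m+\mathbf{E}_n$. By Definition 1 with $\delta=0$, the tuple $(i,j,m,n)$ is a parallelogram in $\mathbf{R}^*$, which is precisely the assertion. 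This mirrors (in the reverse direction) the contradiction argument given earlier in the section, which showed that a parallelogram in a zero-loss model forces $i+j=m+n$; here injectivity lets us run the implication the other way.

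There is essentially no hard step; the proposition is an immediate consequence of the two idealizing assumptions, and the ``main obstacle'' is only a matter of bookkeeping about which loss functions make the first step exact. The one point I would flag with a sentence is the claim that vanishing loss implies the decoder reproduces targets \emph{exactly} rather than approximately: this is automatic for a strictly proper loss such as MSE, and for classification it holds in the idealized sense that the logit configuration matches the one-hot label. More realistically one replaces every ``$=$'' above by ``$\leq$ (something small)'' and recovers a $\delta$-parallelogram, with $\delta$ controlled by the residual training loss together with the modulus of continuity of the inverse ${\rm Dec}^{*-1}$ restricted to the (compact) set of attained sums $\mathbf{E}_a+\mathbf{E}_b$. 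I would state the clean $\delta=0$ version as in the paper and leave this robustness remark as a footnote.
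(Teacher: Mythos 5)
Your argument is correct and is essentially identical to the paper's proof: zero training loss gives ${\rm Dec}^*(\mathbf{E}_i+\mathbf{E}_j)=\mathbf{Y}_{i+j}=\mathbf{Y}_{m+n}={\rm Dec}^*(\mathbf{E}_m+\mathbf{E}_n)$, and injectivity of ${\rm Dec}^*$ then forces $\mathbf{E}_i+\mathbf{E}_j=\mathbf{E}_m+\mathbf{E}_n$. Your added remarks on the $\delta>0$ robustness are fine but go beyond what the paper states or needs.
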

\begin{proof}
Due to the zero training loss assumption, we have ${\rm Dec}^*(\mathbf{E}_i+\mathbf{E}_j)=\mathbf{Y}_{i+j}=\mathbf{Y}_{m+n}={\rm Dec}^*(\mathbf{E}_m+\mathbf{E}_n)$. Then the injectivity of ${\rm Dec^*}$ implies $\mathbf{E}_i+\mathbf{E}_j=\mathbf{E}_m+\mathbf{E}_n$.
\end{proof}


The dynamics of the trained embedding vectors are determined by various factors interacting in complex ways, for instance: the details of the decoder architecture, the optimizer hyperparameters, and the various kinds of implicit regularization induced by the training procedure. We will see that the dynamics of normalized quantities, namely, the normalized embeddings at time $t$, defined as $\tilde{\mathbf{E}}_k^{(t)} = \frac{\mathbf{E}_k^{(t)} - \mu_t}{\sigma_t}$, where $\mu_t = \frac{1}{p}\sum_k \mathbf{E}_k^{(t)}$ and $\sigma_t = \frac{1}{p}\sum_k |\mathbf{E}_k^{(t)} - \mu_t|^2$, can be qualitatively described by a simple effective loss (in the physics effective theory sense).
We will assume that the normalized embedding vectors obey a gradient flow for an effective loss function of the form
\begin{equation}\label{eq:Ei_dynamics}
    \frac{d\mathbf{\tilde E}_i}{dt} = -\frac{\partial \ell_{\text{eff}}}{\partial \mathbf{\tilde E}_i},
\end{equation}
\begin{equation}\label{eq:l_eff}
    \ell_{\text{eff}} = \frac{\ell_0}{Z_0},\quad  \ell_0\equiv\sum_{(i,j,m,n)\in P_0(D)}|\mathbf{\tilde E}_i+\mathbf{\tilde E}_j-\mathbf{\tilde E}_m-\mathbf{\tilde E}_n|^2/|P_0(D)|,\quad  Z_0\equiv \sum_{k}|\mathbf{\tilde E}_k|^2,
\end{equation}
where
$|\cdot|$ denotes Euclidean vector norm. Note that the embeddings do not collapse to the trivial solution $\mathbf{E}_0=\cdots=\mathbf{E}_{p-1}=0$ unless initialized as such, because two conserved quantities exist, as proven in Appendix~\ref{app:conservation-law}:
\begin{equation}
    \mathbf{C}=\sum_k \mathbf{E}_k, \quad 
    Z_0 = \sum_{k} |\mathbf{E}_k|^2.
\end{equation}

We shall now use the effective dynamics to explain empirical observations such as the existence of a critical training set size for generalization.


\begin{figure}[t]
    \centering
    \includegraphics[width=1\linewidth]{./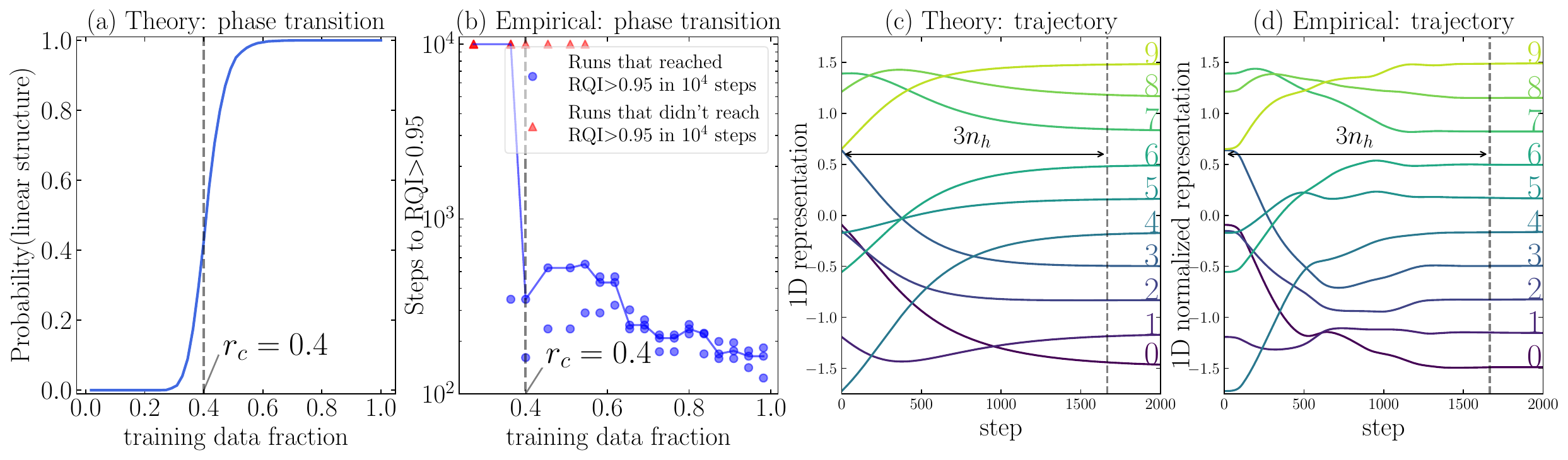}
    \caption{(a) The effective theory predicts a phase transition in the probability of obtaining a linear representation  around $r_c=0.4$. (b) Empirical results display a phase transition of RQI around $r_c=0.4$, in agreement with the theory (the blue line shows the median of multiple random seeds). The evolution of 1D representations predicted by the effective theory or obtained from neural network training (shown in (c) and (d) respectively) agree creditably well.}
    \label{fig:time}
\end{figure}

{\bf Degeneracy of ground states (loss optima)}  We define ground states as those representations satisfying $\ell_{\text{eff}}=0$, which requires the following linear equations to hold:
\begin{equation}\label{eq:A}
    A(P) = \{\mathbf{E}_i+\mathbf{E}_j=\mathbf{E}_m+\mathbf{E}_n|(i,j,m,n)\in P\}.
\end{equation}
Since each embedding dimension obeys the same set of linear equations, we will assume, without loss of generality, that $d_{\rm in}=1$. The dimension of the null space  of $A(P)$, denoted as $n_0$, is the number of degrees of freedom of the ground states. Given a set of parallelograms implied by a training dataset $D$, the nullity of $A(P(D))$ could be obtained by computing the singular values $0\leq \sigma_1\leq\cdots\leq\sigma_p$.
We always have $n_0\geq 2$, i.e., $\sigma_1=\sigma_{2}=0$ because the nullity of $A(P_0)$, the set of linear equations given by all possible parallelograms, is $\mathrm{Nullity} (A(P_0)) = 2$ which can be attributed to two degrees of freedom (translation and scaling).
If $n_0=2$, the representation is unique up to translations and scaling factors, and the embeddings have the form $\mathbf{E}_k=\mathbf{a}+k\mathbf{b}$. Otherwise, when  $n_0>2$, the representation is not constrained enough such that all the embeddings lie on a line.

We present theoretical predictions alongside empirical results for addition ($p=10$) in Figure~\ref{fig:time}. As shown in Figure~\ref{fig:time}~(a), our effective theory predicts that the probability that the training set implies a unique linear structure (which would result in perfect generalization) depends on the training data fraction and has a phase transition around $r_c=0.4$. Empirical results from training different models are shown in Figure~\ref{fig:time} (b). The number of steps to reach ${\rm RQI}>0.95$ is seen to have a phase transition at $r_c=0.4$, agreeing with the proposed effective theory and with the empirical findings in ~\cite{power2022grokking}. 

{\bf Time towards the linear structure} We define the Hessian matrix of $\ell_0$ as
\begin{equation}
    \mat{H}_{ij}=\frac{1}{Z_0}\frac{\partial^2 \ell_0}{\partial \mathbf{E}_i\partial \mathbf{E}_j},
\end{equation}
Note that $\ell_{\rm eff}=\frac{1}{2}\mat{R}^T\mat{H}\mat{R}$, $\mat{R}=[\mathbf{E}_0, \mathbf{E}_1,\cdots, \mathbf{E}_{p-1}]$, so the gradient descent is linear, i.e.,
\begin{equation}
    \frac{d\mat{R}}{dt} = - \mat{H}\mat{R}.
\end{equation}

If $\mat{H}$ has eigenvalues $\lambda_i=\sigma_i^2$ (sorted in increasing order) and eigenvectors $\bar{\mat{v}}_i$, and we have the initial condition $\mat{R}(t=0)=\sum_i a_i\bar{\mat{v}}_i$, then we have $\mat{R}(t)=\sum_i a_i\bar{\mat{v}}_ie^{-\lambda_it}$. The first two eigenvalues vanish and $t_h=1/\lambda_{3}$ determines the timescale for the slowest component to decrease by a factor of $e$. We call $\lambda_3$ the \textit{grokking rate}.   When the step size is $\eta$, the corresponding number of steps is $n_h=t_h/\eta=1/(\lambda_3\eta)$. 

We verify the above analysis with empirical results. Figure~\ref{fig:time} (c)(d) show the trajectories obtained from the effective theory and from neural network training, respectively. The 1D neural representation in Figure~\ref{fig:time} (d) are manually normalized to zero mean and unit variance. The two trajectories agree qualitatively, and it takes about $3n_h$ steps for two trajectories to converge to the linear structure. The quantitative differences might be due to the absence of the decoder in the effective theory, which assumes the decoder to take infinitesimal step sizes. 

{\bf Dependence of grokking on data size} Note that $\ell_{\rm eff}$ involves averaging over parallelograms in the training set, it is dependent on training data size, so is $\lambda_3$. In Figure \ref{fig:lambda3} (a), we plot the dependence of $\lambda_3$ on training data fraction. There are many datasets with the same data size, so $\lambda_3$ is a probabilistic function of data size. 

Two insights on grokking can be extracted from this plot: (i) When the data fraction is below some threshold (around 0.4), $\lambda_3$ is zero with high probability, corresponding to no generalization. This again verifies our critical point in Figure \ref{fig:time}. (ii) When data size is above the threshold, $\lambda_3$ (on average) is an increasing function of data size. This implies that grokking time $t\sim 1/\lambda_3$ decreases as training data size becomes larger, an important observation from ~\cite{power2022grokking}.

To verify our effective theory, we compare the grokking steps obtained from real neural network training (defined as steps to ${\rm RQI}>0.95$), and those predicted by our theory $t_{\rm th}\sim \frac{1}{\lambda_3\eta}$ ($\eta$ is the embedding learning rate), shown in Figure \ref{fig:lambda3} (b). The theory agrees qualitatively with neural networks, showing the trend of decreasing grokking steps as increasing data size. The quantitative differences might be explained as the gap between our effective loss and actual loss.

{\bf Limitations of the effective theory}
While our theory defines an effective loss based on the Euclidean distance between embeddings $\mat{E}_i + \mat{E}_j$ and $\mat{E}_n + \mat{E}_m$, one could imagine generalizing the theory to define a broader notion of parallogram given by some other metric on the representation space. For instance, if we have a decoder like in Figure~\ref{fig:toys} (d) then the distance between distinct representations within the same ``pizza slice'' is low, meaning that representations arranged not in parallelograms w.r.t. the Euclidean metric may be parallelograms with respect to the metric defined by the decoder.

\begin{figure}[t]
    \centering
    \begin{subfigure}[]{0.45\textwidth}
    \includegraphics[width=\linewidth]{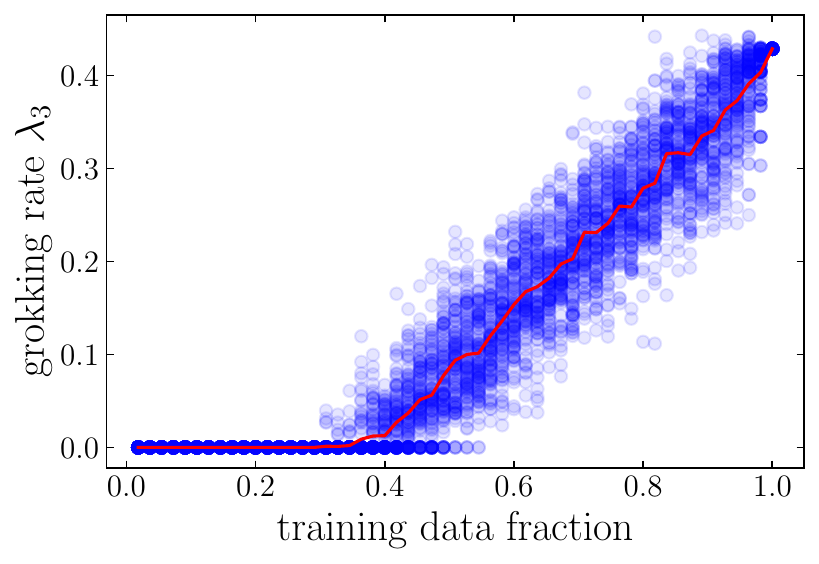}
    \caption{}
    \end{subfigure}
    \begin{subfigure}[]{0.45\textwidth}
    \includegraphics[width=\linewidth]{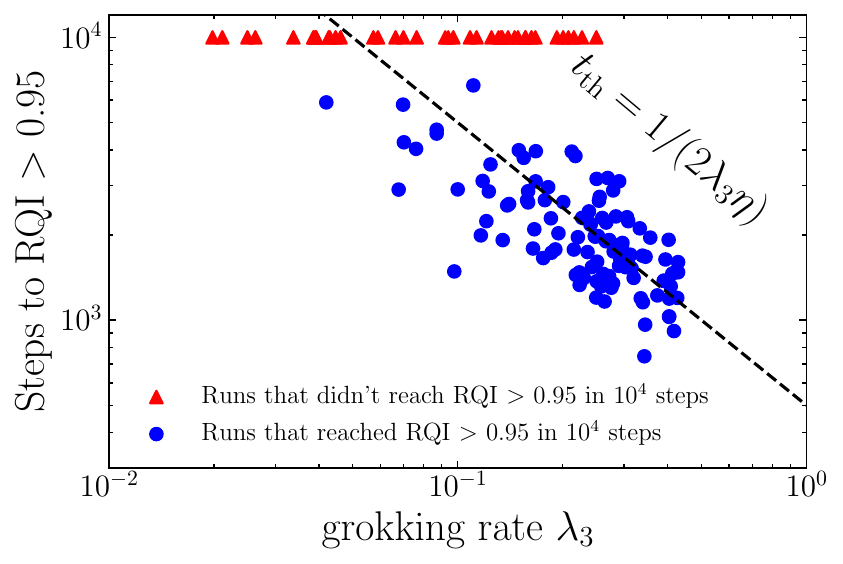}
    \caption{}
    \end{subfigure}
    \caption{Effective theory explains the dependence of grokking time on data size, for the addition task. (a) Dependence of $\lambda_3$ on training data fraction. Above the critical data fraction (around 0.4), as data size becomes larger, $\lambda_3$ increases hence grokking time $t\sim 1/\lambda_3$ (predicted by our effective theory) decreases. (b) Comparing grokking steps (defined as ${\rm RQI}>0.95$) predicted by the effective theory with real neural network results. $\eta=10^{-3}$ is the learning rate of the embeddings.}
    \label{fig:lambda3}
\end{figure}

\section{Delayed Generalization: A Phase Diagram}\label{sec:phase_diagram}

So far, we have (1) observed empirically that generalization on algorithmic datasets corresponds with the emergence of well-structured representations, (2) defined a notion of representation quality in a toy setting and shown that it predicts generalization, and (3) developed an effective theory to describe the learning dynamics of the representations in the same toy setting. We now study how optimizer hyperparameters affect high-level learning performance. In particular, we develop phase diagrams for how learning performance depends on the representation learning rate, decoder learning rate and the decoder weight decay. These parameters are of interest since they most explicitly regulate a kind of \emph{competition} between the encoder and decoder, as we elaborate below. 


\subsection{Phase diagram of a toy model}
\label{sec:toyphases}

{\bf Training details} We update the representation and the decoder with different optimizers. For the 1D embeddings, we use the Adam optimizer with learning rate $[10^{-5},10^{-2}]$ and zero weight decay. For the decoder, we use an AdamW optimizer with the learning rate in $[10^{-5},10^{-2}]$ and the weight decay in $[0,10]$ (regression) or $[0,20]$ (classification). For training/validation spliting, we choose 45/10 for non-modular addition ($p=10$) and 24/12 for the permutation group $S_3$. We hard-code addition or matrix multiplication (details in Appendix \ref{app:non-abelian}) in the decoder for the addition group and the permutation group, respectively. 

For each choice of learning rate and weight decay, we compute the number of steps to reach high (90\%) training/validation accuracy. The 2D plane is split into four phases: \textit{comprehension}, \textit{grokking}, \textit{memorization} and \textit{confusion}, defined in Table~\ref{tab:four_phases} in Appendix~\ref{app:defs_table}. Both comprehension and grokking are able to generalize (in the ``Goldilocks zone''), although the grokking phase has delayed generalization. Memorization is also called overfitting, and confusion means failure to even memorize training data. Figure~\ref{fig:grokking_pd} shows the phase diagrams for the addition group and the permutation group.
They display quite rich phenomena.

\begin{figure}[t]
    \centering
    \begin{subfigure}[]{0.35\textwidth}
    \includegraphics[width=\linewidth]{./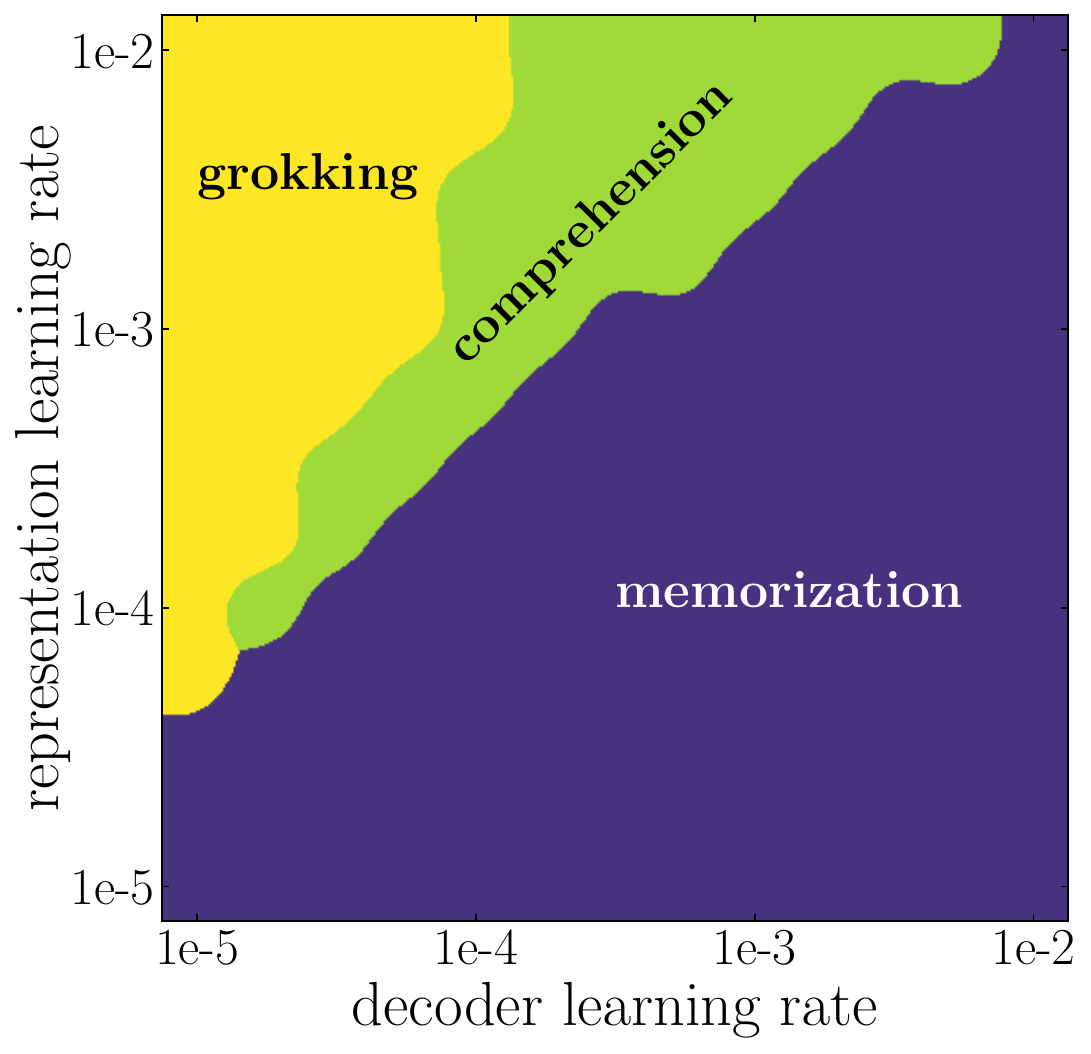}
    \caption{Addition, regression}
    \end{subfigure}
    \begin{subfigure}[]{0.35\textwidth}
    \includegraphics[width=\linewidth]{./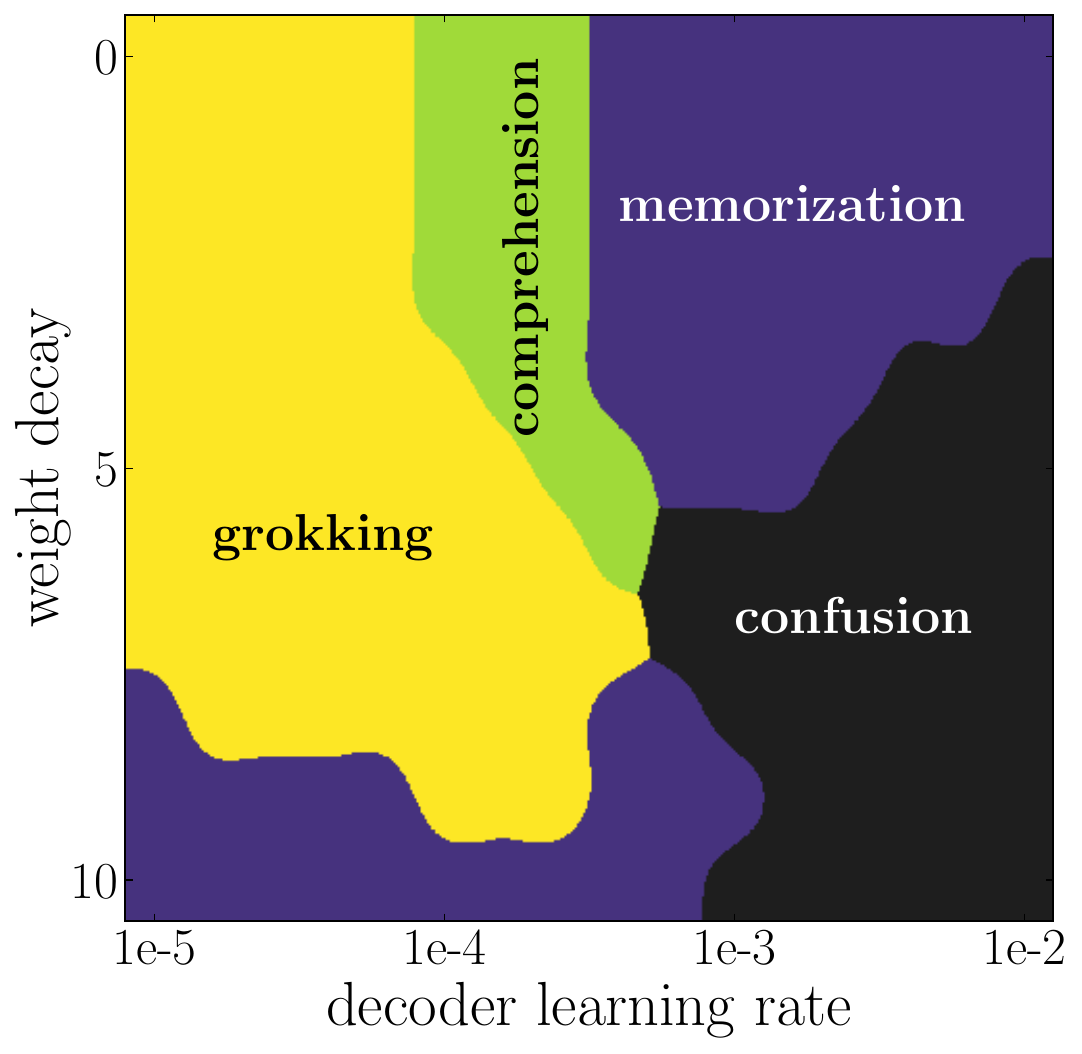}
    \caption{Addition, regression}
    \end{subfigure}
    \begin{subfigure}[]{0.35\textwidth}
    \includegraphics[width=\linewidth]{./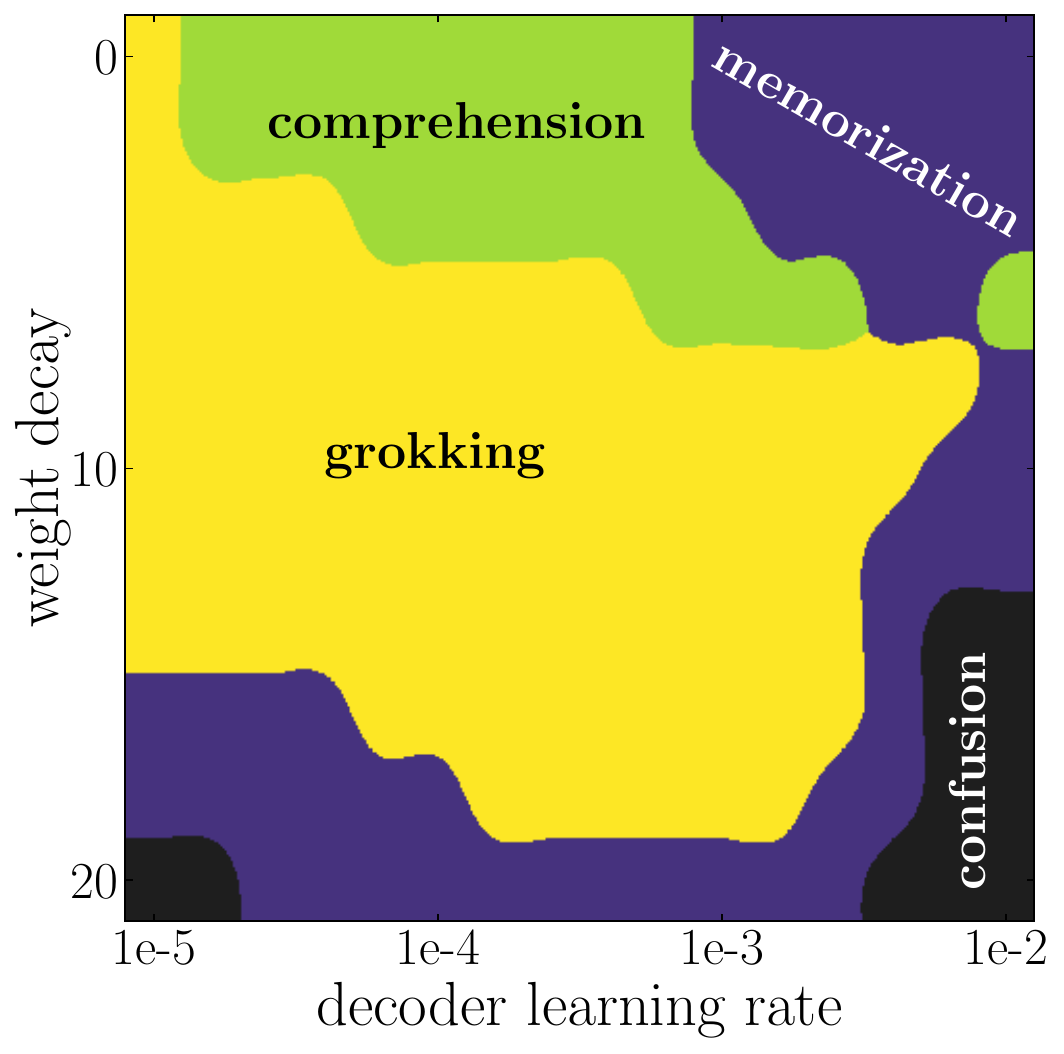}
    \caption{Addition, classification}
    \end{subfigure}
    \begin{subfigure}[]{0.35\textwidth}
    \includegraphics[width=\linewidth]{./simple_pd_permutation_regression}
    \caption{Permutation, regression}
    \end{subfigure}
    \caption{Phase diagrams of learning for the addition group and the permutation group. (a) shows the competition between representation and decoder. (b)(c)(d): each phase diagram contains four phases: comprehension, grokking, memorization and confusion, defined in Table \ref{tab:four_phases}. In (b)(c)(d), grokking is sandwiched between comprehension and memorization.
    }
    \label{fig:grokking_pd}
\end{figure}

{\bf Competition between representation learning and decoder overfitting} In the regression setup of the addition dataset, we show how the competition between representation learning and decoder learning (which depend on both learning rate and weight decay, among other things) lead to different learning phases in Figure~\ref{fig:grokking_pd} (a). As expected, a fast decoder coupled with slow representation learning (bottom right) lead to memorization. In the opposite extreme, although an extremely slow decoder coupled with fast representation learning (top left) will generalize in the end, the generalization time is long due to the inefficient decoder training. The ideal phase (comprehension) requires representation learning to be faster, but not too much, than the decoder.

Drawing from an analogy to physical systems, one can think of embedding vectors as a group of particles. In our effective theory from Section~\ref{sec:effective_theory}, the dynamics of the particles are described \emph{only} by their relative positions, in that sense, structure forms mainly due to inter-particle interactions (in reality, these interactions are mediated by the decoder and the loss).
The decoder plays the role of an environment exerting external forces on the embeddings. If the magnitude of the external forces are small/large one can expect better/worse representations.

{\bf Universality of phase diagrams} We fix the embedding learning rate to be $10^{-3}$ and sweep instead decoder weight decay in Figure~\ref{fig:grokking_pd} (b)(c)(d). The phase diagrams correspond to addition regression (b), addition classification (c) and permutation regression (d), respectively. Common phenomena emerge from these different tasks: (i) they all include four phases; (ii) The top right corner (a fast and capable decoder) is the memorization phase; (iii) the bottom right corner (a fast and simple decoder) is the confusion phase; (iv) grokking  is sandwiched between comprehension and memorization, which seems to imply that it is an undesirable phase that stems from improperly tuned hyperparameters. 

\subsection{Beyond the toy model} 
\label{sec:beyond_toy}
We conjecture that many of the principles which we saw dictate the training dynamics in the toy model also apply more generally. Below, we will see how our framework generalizes to transformer architectures for the task of addition modulo $p$, a minimal reproducible example of the original grokking paper \cite{power2022grokking}. 

We first encode $p=53$ integers into 256D learnable embeddings, then pass two integers to a decoder-only transformer architecture. For simplicity, we do not encode the operation symbols here. The outputs from the last layer are concatenated and passed to a linear layer for classification. Training both the encoder and the decoder with the same optimizer (i.e., with the same hyperparameters) leads to the grokking phenomenon. 
Generalization appears much earlier once we lower the effective decoder capacity with weight decay 
(full phase diagram in Figure \ref{fig:degrok}).

\begin{figure}[h]
    \begin{subfigure}{.3\textwidth}
      \centering
      \includegraphics[width=\linewidth]{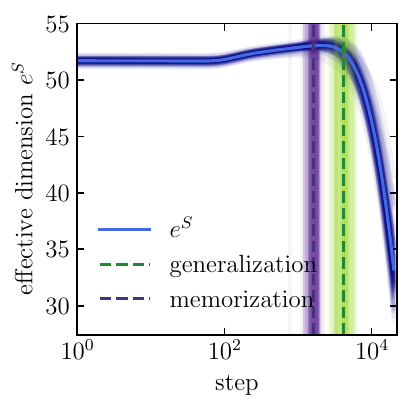}
      \label{fig:transformer-entropy}
      \vspace{-15pt}%
    \end{subfigure}
    \begin{subfigure}{.3\textwidth}
      \centering
      \includegraphics[width=\linewidth]{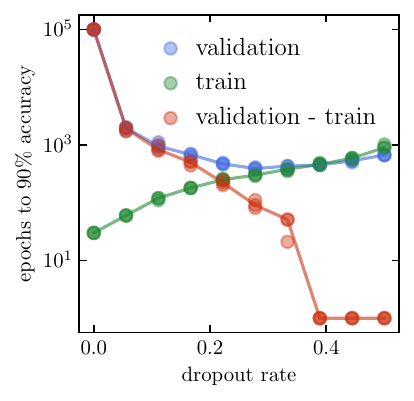}
      \label{fig:transformer-dropout}
      \vspace{-15pt}%
    \end{subfigure}%
    \centering
    \begin{subfigure}{.31\textwidth}
      \centering
      \includegraphics[width=\linewidth]{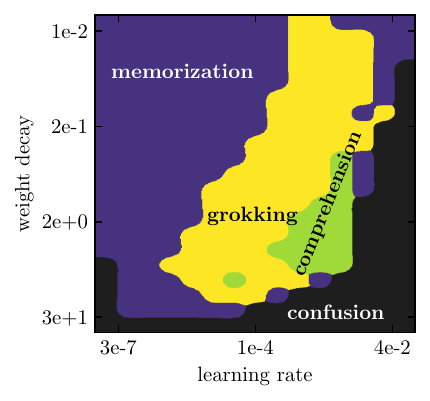}
      \label{fig:transformer-phase}
      \vspace{-15pt}%
    \end{subfigure}
    \caption{
    Left: Evolution of the effective dimension of the embeddings (defined as the exponential of the entropy) during training and evaluated over 100 seeds.
    Center: Effect of dropout on speeding up generalization.
    Right: Phase diagram of the transformer architecture. A scan is performed over the weight decay and learning rate of the decoder while the learning rate of the embeddings is kept fixed at $10^{-3}$ (with zero weight decay).
    }
    \label{fig:degrok}
\end{figure}

Early on, the model is able to perfectly fit the training set while having no generalization. We study the embeddings at different training times and find that neither PCA (shown in Figure~\ref{fig:ring}) nor t-SNE (not shown here) reveal any structure. Eventually, validation accuracy starts to increase, and perfect generalization coincides with the PCA projecting the embeddings into a circle in 2D. Of course, no choice of dimensionality reduction is guaranteed to find any structure, and thus, it is challenging to show explicitly that generalization only occurs when a structure exists. Nevertheless, the fact that, when coupled with the implicit regularization of the optimizer for sparse solutions, such a clear structure appears in a simple PCA so quickly at generalization time suggests that our analysis in the toy setting is applicable here as well. This is also seen in the evolution of the entropy of the explained variance ratio in the PCA of the embeddings (defined as $S = - \sum_i \sigma_i \log\sigma_i$ where $\sigma_i$ is the fractional variance explained by the $i$th principal component). As seen in Figure~\ref{fig:degrok}, the entropy increases up to generalization time then decreases drastically afterwards which would be consistent with the conjecture that generalization occurs when a low-dimensional structure is discovered. The decoder then primarily relies on the information in this low-dimensional manifold and essentially ``prunes'' the rest of the high-dimensional embedding space. Another interesting insight appears when we project the embeddings at initialization onto the principal axes at the end of training. Some of the structure required for generalization exists before training hinting at a connection with the Lottery Ticket Hypothesis.
See Appendix~\ref{app:lth} for more details.

In Figure \ref{fig:degrok} (right), we show a comparable phase diagram to Figure \ref{fig:grokking_pd} evaluated now in the transformer setting.
Note that, as opposed to the setting in \cite{power2022grokking}, weight decay has only been applied to the decoder and not to the embedding layer.
Contrary to the toy model, a certain amount of weight decay proves beneficial to generalization and speeds it up significantly.
We conjecture that this difference comes from the different embedding dimensions. With a highly over-parameterized setting, a non-zero weight decay gives a crucial incentive to reduce complexity in the decoder and help generalize in fewer steps.
This is subject to further investigation. 
We also explore the effect of dropout layers in the decoder blocks of the transformer.
With a significant dropout rate, the generalization time can be brought down to under $10^3$ steps and the grokking phenomenon vanishes completely.
The overall trend suggests that constraining the decoder with the same tools used to avoid overfitting reduces generalization time and can avoid the grokking phenomenon. This is also observed in an image classification task where we were able to induce grokking. See Appendix~\ref{app:mnist_grok} for more details.

\subsection{Grokking Experiment on MNIST}
\label{app:mnist_grok_main}

\begin{figure}[h!]
    \begin{subfigure}{0.6\textwidth}
      \centering
      \includegraphics[height=2in]{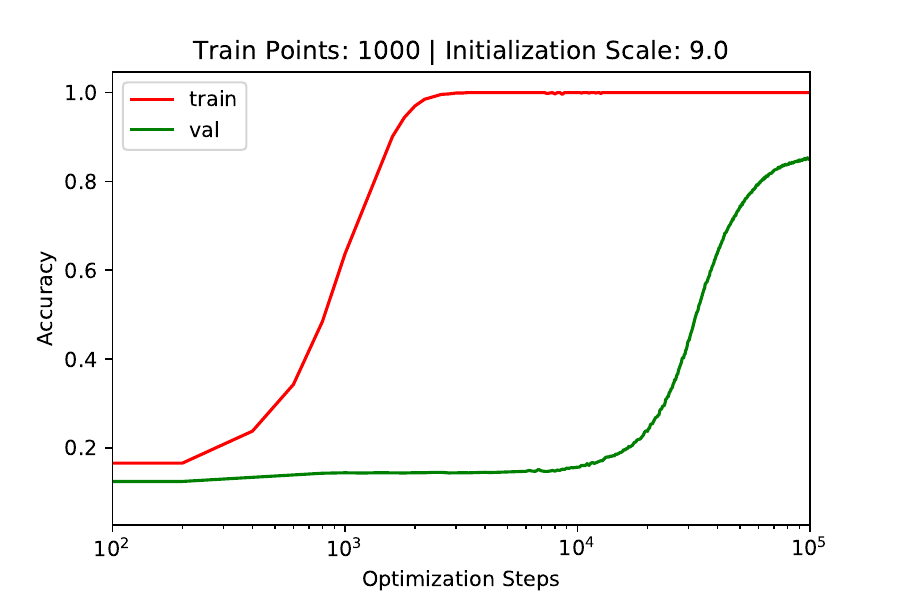}
        \caption{}
        \label{fig:mnist-learning-curve}
    \end{subfigure}
    \begin{subfigure}{0.35\textwidth}
      \centering
      \includegraphics[height=2in]{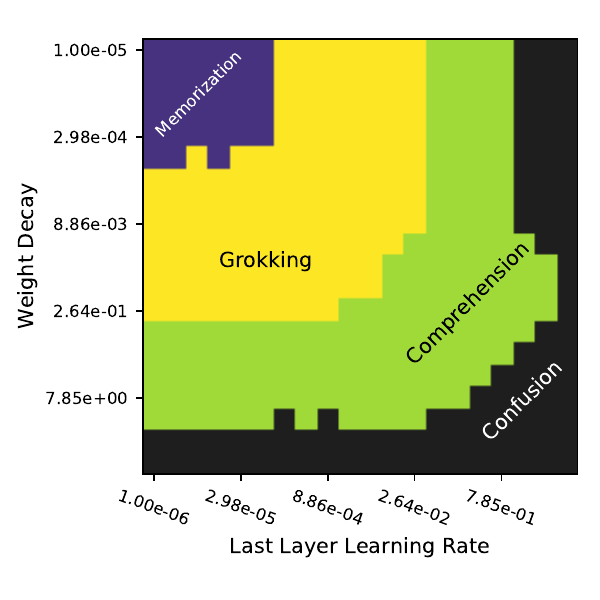}
        \caption{}
        \label{fig:mnist-phase}
    \end{subfigure}
    \caption{Left: Training curves for a run on MNIST, in the setting where we observe grokking. Right: Phase diagram with the four phases of learning dynamics on MNIST.}
    \label{fig:mnist-grok}
\end{figure}

We now demonstrate, for the first time, that grokking (significantly delayed generalization) is a more general phenomenon in machine learning that can occur not only on algorithmic datasets, but also on mainstream benchmark datasets. In particular, we exhibit grokking on MNIST in Figure~\ref{fig:mnist-grok} and demonstrate that we can control grokking by varying optimization hyperparameters. More details on the experimental setup are in Appendix~\ref{app:mnist_grok}.
\section{Related work}\label{sec:related_works}
Relatively few works have analyzed the phenomenon of grokking. \cite{neel2022} describe the circuit that transformers use to perform modular addition, track its formation over training, and broadly suggest that grokking is related to the phenomenon of ``phase changes'' in neural network training. \cite{beren2022grokking, rohin2021grokking} provided earlier speculative, informal conjectures on grokking~\cite{beren2022grokking, rohin2021grokking}. 
Our work is related to the following broad research directions:


{\bf Learning mathematical structures} ~\cite{hoshen2016visual} trains a neural network to learn arithmetic operation from pictures of digits, but they do not observe grokking due to their abundant training data. Beyond arithmetic relations, machine learning has been applied to learn other mathematical structures, including geometry~\cite{he2021machine}, knot theory~\cite{gukov2021learning} and group theory~\cite{davies2021advancing}.

{\bf Double descent} Grokking is somewhat reminiscent of the phenomena of ``epoch-wise'' \emph{double descent}~\cite{nakkiran2021deep}, where generalization can improve after a period of overfitting. \cite{nakkiran2020optimal} find that regularization can mitigate double descent, similar perhaps to how weight decay influences grokking.


{\bf Representation learning} Representation learning lies at the core of machine learning~\cite{bengio2013representation,ouali2020overview,grill2020bootstrap,le2020contrastive}. Representation quality is usually measured by (perhaps vague) semantic meanings or performance on downstream tasks. In our study, the simplicity of arithmetic datasets allows us to define representation quality and study evolution of representations in a quantitative way.

{\bf Physics of learning} Physics-inspired tools have proved to be useful in understanding deep learning from a theoretical perspective. These tools include effective theories~\cite{halverson2021neural, roberts2021principles}, conservation laws~\cite{kunin2020neural} and free energy principle~\cite{gao2020free}. In addition, statistical physics has been identified as a powerful tool in studying generalization in neural networks~\cite{gerace2020generalisation, pezeshki2022multi,pmlr-v145-goldt22a,kuhn1993statistical}. 
Our work connects a low-level understanding of models with their high-level performance. 
In a recent work, researchers at Anthropic~\cite{olsson2022context}, connect a sudden decrease in loss during training with the emergence of \emph{induction heads} within their models. They analogize their work to \emph{statistical physics}, since it bridges a ``microscopic'', mechanistic understanding of networks with ``macroscopic'' facts about overall model performance. 

\section{Conclusion}\label{sec:conclusions}
We have shown how, in both toy models and general settings, that representation enables generalization when it reflects structure in the data. We developed an effective theory of representation learning dynamics (in a toy setting) which predicts the critical dependence of learning on the training data fraction.  
We then presented four learning phases (comprehension, grokking, memorization and confusion) which depend on the decoder capacity and learning speed (given by, among other things, learning rate and weight decay) in decoder-only architectures. While we have mostly focused on a toy model, we find preliminary evidence that our results generalize to the setting of~\cite{power2022grokking}.


Our work can be viewed as a step towards a \emph{statistical physics of deep learning}, connecting the ``microphysics'' of low-level network dynamics with the ``thermodynamics'' of high-level model behavior. We view the application of theoretical tools from physics, such as effective theories~\cite{PDLT-2022}, to be a rich area for further work. The broader impact of such work, if successful, could be to make models more transparent and predictable~\cite{olsson2022context, ganguli2022predictability, moreisdifferentforai}, crucial to the task of ensuring the safety of advanced AI systems.

\bibliography{arithmetic}

\begin{thebibliography}{34}
\providecommand{\natexlab}[1]{#1}
\providecommand{\url}[1]{\texttt{#1}}
\expandafter\ifx\csname urlstyle\endcsname\relax
  \providecommand{\doi}[1]{doi: #1}\else
  \providecommand{\doi}{doi: \begingroup \urlstyle{rm}\Url}\fi

\bibitem[Power et~al.(2022)Power, Burda, Edwards, Babuschkin, and
  Misra]{power2022grokking}
Alethea Power, Yuri Burda, Harri Edwards, Igor Babuschkin, and Vedant Misra.
\newblock Grokking: Generalization beyond overfitting on small algorithmic
  datasets.
\newblock \emph{arXiv preprint arXiv:2201.02177}, 2022.

\bibitem[Nanda and Lieberum(2022)]{neel2022}
Neel Nanda and Tom Lieberum.
\newblock A mechanistic interpretability analysis of grokking, 2022.
\newblock URL
  \url{https://www.alignmentforum.org/posts/N6WM6hs7RQMKDhYjB/a-mechanistic-interpretability-analysis-of-grokking}.

\bibitem[Millidge(2022)]{beren2022grokking}
Beren Millidge.
\newblock {Grokking 'grokking'}.
\newblock \url{https://beren.io/2022-01-11-Grokking-Grokking/}, 2022.

\bibitem[Shah(2021)]{rohin2021grokking}
Rohin Shah.
\newblock {Alignment Newsletter \#159}.
\newblock
  \url{https://www.alignmentforum.org/posts/zvWqPmQasssaAWkrj/an-159-building-agents-that-know-how-to-experiment-by#DEEP_LEARNING_},
  2021.

\bibitem[Hoshen and Peleg(2016)]{hoshen2016visual}
Yedid Hoshen and Shmuel Peleg.
\newblock Visual learning of arithmetic operation.
\newblock In \emph{AAAI}, 2016.

\bibitem[He(2021)]{he2021machine}
Yang-Hui He.
\newblock Machine-learning mathematical structures.
\newblock \emph{arXiv preprint arXiv:2101.06317}, 2021.

\bibitem[Gukov et~al.(2021)Gukov, Halverson, Ruehle, and
  Su{\l}kowski]{gukov2021learning}
Sergei Gukov, James Halverson, Fabian Ruehle, and Piotr Su{\l}kowski.
\newblock Learning to unknot.
\newblock \emph{Machine Learning: Science and Technology}, 2\penalty0
  (2):\penalty0 025035, 2021.

\bibitem[Davies et~al.(2021)Davies, Veli{\v{c}}kovi{\'c}, Buesing, Blackwell,
  Zheng, Toma{\v{s}}ev, Tanburn, Battaglia, Blundell, Juh{\'a}sz,
  et~al.]{davies2021advancing}
Alex Davies, Petar Veli{\v{c}}kovi{\'c}, Lars Buesing, Sam Blackwell, Daniel
  Zheng, Nenad Toma{\v{s}}ev, Richard Tanburn, Peter Battaglia, Charles
  Blundell, Andr{\'a}s Juh{\'a}sz, et~al.
\newblock Advancing mathematics by guiding human intuition with ai.
\newblock \emph{Nature}, 600\penalty0 (7887):\penalty0 70--74, 2021.

\bibitem[Nakkiran et~al.(2021)Nakkiran, Kaplun, Bansal, Yang, Barak, and
  Sutskever]{nakkiran2021deep}
Preetum Nakkiran, Gal Kaplun, Yamini Bansal, Tristan Yang, Boaz Barak, and Ilya
  Sutskever.
\newblock Deep double descent: Where bigger models and more data hurt.
\newblock \emph{Journal of Statistical Mechanics: Theory and Experiment},
  2021\penalty0 (12):\penalty0 124003, 2021.

\bibitem[Nakkiran et~al.(2020)Nakkiran, Venkat, Kakade, and
  Ma]{nakkiran2020optimal}
Preetum Nakkiran, Prayaag Venkat, Sham Kakade, and Tengyu Ma.
\newblock Optimal regularization can mitigate double descent.
\newblock \emph{arXiv preprint arXiv:2003.01897}, 2020.

\bibitem[Bengio et~al.(2013)Bengio, Courville, and
  Vincent]{bengio2013representation}
Yoshua Bengio, Aaron Courville, and Pascal Vincent.
\newblock Representation learning: A review and new perspectives.
\newblock \emph{IEEE transactions on pattern analysis and machine
  intelligence}, 35\penalty0 (8):\penalty0 1798--1828, 2013.

\bibitem[Ouali et~al.(2020)Ouali, Hudelot, and Tami]{ouali2020overview}
Yassine Ouali, C{\'e}line Hudelot, and Myriam Tami.
\newblock An overview of deep semi-supervised learning.
\newblock \emph{arXiv preprint arXiv:2006.05278}, 2020.

\bibitem[Grill et~al.(2020)Grill, Strub, Altch{\'e}, Tallec, Richemond,
  Buchatskaya, Doersch, Avila~Pires, Guo, Gheshlaghi~Azar,
  et~al.]{grill2020bootstrap}
Jean-Bastien Grill, Florian Strub, Florent Altch{\'e}, Corentin Tallec, Pierre
  Richemond, Elena Buchatskaya, Carl Doersch, Bernardo Avila~Pires, Zhaohan
  Guo, Mohammad Gheshlaghi~Azar, et~al.
\newblock Bootstrap your own latent-a new approach to self-supervised learning.
\newblock \emph{Advances in Neural Information Processing Systems},
  33:\penalty0 21271--21284, 2020.

\bibitem[Le-Khac et~al.(2020)Le-Khac, Healy, and Smeaton]{le2020contrastive}
Phuc~H Le-Khac, Graham Healy, and Alan~F Smeaton.
\newblock Contrastive representation learning: A framework and review.
\newblock \emph{IEEE Access}, 8:\penalty0 193907--193934, 2020.

\bibitem[Halverson et~al.(2021)Halverson, Maiti, and
  Stoner]{halverson2021neural}
James Halverson, Anindita Maiti, and Keegan Stoner.
\newblock Neural networks and quantum field theory.
\newblock \emph{Machine Learning: Science and Technology}, 2\penalty0
  (3):\penalty0 035002, 2021.

\bibitem[Roberts et~al.(2021)Roberts, Yaida, and Hanin]{roberts2021principles}
Daniel~A Roberts, Sho Yaida, and Boris Hanin.
\newblock The principles of deep learning theory.
\newblock \emph{arXiv preprint arXiv:2106.10165}, 2021.

\bibitem[Kunin et~al.(2020)Kunin, Sagastuy-Brena, Ganguli, Yamins, and
  Tanaka]{kunin2020neural}
Daniel Kunin, Javier Sagastuy-Brena, Surya Ganguli, Daniel~LK Yamins, and
  Hidenori Tanaka.
\newblock Neural mechanics: Symmetry and broken conservation laws in deep
  learning dynamics.
\newblock \emph{arXiv preprint arXiv:2012.04728}, 2020.

\bibitem[Gao and Chaudhari(2020)]{gao2020free}
Yansong Gao and Pratik Chaudhari.
\newblock A free-energy principle for representation learning.
\newblock In \emph{International Conference on Machine Learning}, pages
  3367--3376. PMLR, 2020.

\bibitem[Gerace et~al.(2020)Gerace, Loureiro, Krzakala, M{\'e}zard, and
  Zdeborov{\'a}]{gerace2020generalisation}
Federica Gerace, Bruno Loureiro, Florent Krzakala, Marc M{\'e}zard, and Lenka
  Zdeborov{\'a}.
\newblock Generalisation error in learning with random features and the hidden
  manifold model.
\newblock In \emph{International Conference on Machine Learning}, pages
  3452--3462. PMLR, 2020.

\bibitem[Pezeshki et~al.(2022)Pezeshki, Mitra, Bengio, and
  Lajoie]{pezeshki2022multi}
Mohammad Pezeshki, Amartya Mitra, Yoshua Bengio, and Guillaume Lajoie.
\newblock Multi-scale feature learning dynamics: Insights for double descent.
\newblock In \emph{International Conference on Machine Learning}, pages
  17669--17690. PMLR, 2022.

\bibitem[Goldt et~al.(2022)Goldt, Loureiro, Reeves, Krzakala, Mezard, and
  Zdeborova]{pmlr-v145-goldt22a}
Sebastian Goldt, Bruno Loureiro, Galen Reeves, Florent Krzakala, Marc Mezard,
  and Lenka Zdeborova.
\newblock The gaussian equivalence of generative models for learning with
  shallow neural networks.
\newblock In Joan Bruna, Jan Hesthaven, and Lenka Zdeborova, editors,
  \emph{Proceedings of the 2nd Mathematical and Scientific Machine Learning
  Conference}, volume 145 of \emph{Proceedings of Machine Learning Research},
  pages 426--471. PMLR, 16--19 Aug 2022.
\newblock URL \url{https://proceedings.mlr.press/v145/goldt22a.html}.

\bibitem[Kuhn and Bos(1993)]{kuhn1993statistical}
R~Kuhn and S~Bos.
\newblock Statistical mechanics for neural networks with continuous-time
  dynamics.
\newblock \emph{Journal of Physics A: Mathematical and General}, 26\penalty0
  (4):\penalty0 831, 1993.

\bibitem[Olsson et~al.(2022)Olsson, Elhage, Nanda, Joseph, DasSarma, Henighan,
  Mann, Askell, Bai, Chen, Conerly, Drain, Ganguli, Hatfield-Dodds, Hernandez,
  Johnston, Jones, Kernion, Lovitt, Ndousse, Amodei, Brown, Clark, Kaplan,
  McCandlish, and Olah]{olsson2022context}
Catherine Olsson, Nelson Elhage, Neel Nanda, Nicholas Joseph, Nova DasSarma,
  Tom Henighan, Ben Mann, Amanda Askell, Yuntao Bai, Anna Chen, Tom Conerly,
  Dawn Drain, Deep Ganguli, Zac Hatfield-Dodds, Danny Hernandez, Scott
  Johnston, Andy Jones, Jackson Kernion, Liane Lovitt, Kamal Ndousse, Dario
  Amodei, Tom Brown, Jack Clark, Jared Kaplan, Sam McCandlish, and Chris Olah.
\newblock In-context learning and induction heads.
\newblock \emph{Transformer Circuits Thread}, 2022.
\newblock
  https://transformer-circuits.pub/2022/in-context-learning-and-induction-heads/index.html.

\bibitem[Roberts et~al.(2022)Roberts, Yaida, and Hanin]{PDLT-2022}
Daniel~A. Roberts, Sho Yaida, and Boris Hanin.
\newblock \emph{The Principles of Deep Learning Theory}.
\newblock Cambridge University Press, 2022.
\newblock \url{https://deeplearningtheory.com}.

\bibitem[Ganguli et~al.(2022)Ganguli, Hernandez, Lovitt, DasSarma, Henighan,
  Jones, Joseph, Kernion, Mann, Askell, et~al.]{ganguli2022predictability}
Deep Ganguli, Danny Hernandez, Liane Lovitt, Nova DasSarma, Tom Henighan, Andy
  Jones, Nicholas Joseph, Jackson Kernion, Ben Mann, Amanda Askell, et~al.
\newblock Predictability and surprise in large generative models.
\newblock \emph{arXiv preprint arXiv:2202.07785}, 2022.

\bibitem[Steinhardt(2022)]{moreisdifferentforai}
Jacob Steinhardt.
\newblock {Future ML Systems Will Be Qualitatively Different}.
\newblock
  \url{https://www.lesswrong.com/s/4aARF2ZoBpFZAhbbe/p/pZaPhGg2hmmPwByHc},
  2022.

\bibitem[Zaheer et~al.(2017)Zaheer, Kottur, Ravanbakhsh, Poczos, Salakhutdinov,
  and Smola]{zaheer2017deep}
Manzil Zaheer, Satwik Kottur, Siamak Ravanbakhsh, Barnabas Poczos, Russ~R
  Salakhutdinov, and Alexander~J Smola.
\newblock Deep sets.
\newblock \emph{Advances in neural information processing systems}, 30, 2017.

\bibitem[Papyan et~al.(2020)Papyan, Han, and Donoho]{papyan2020prevalence}
Vardan Papyan, XY~Han, and David~L Donoho.
\newblock Prevalence of neural collapse during the terminal phase of deep
  learning training.
\newblock \emph{Proceedings of the National Academy of Sciences}, 117\penalty0
  (40):\penalty0 24652--24663, 2020.

\bibitem[{Wikipedia contributors}(2022)]{enwiki:1091431454}
{Wikipedia contributors}.
\newblock Thomson problem --- {Wikipedia}{,} the free encyclopedia.
\newblock
  \url{https://en.wikipedia.org/w/index.php?title=Thomson_problem&oldid=1091431454},
  2022.
\newblock [Online; accessed 29-July-2022].

\bibitem[Chen and He(2021)]{chen2021exploring}
Xinlei Chen and Kaiming He.
\newblock Exploring simple siamese representation learning.
\newblock In \emph{Proceedings of the IEEE/CVF Conference on Computer Vision
  and Pattern Recognition}, pages 15750--15758, 2021.

\bibitem[Xu et~al.(2019)Xu, Zhang, and Xiao]{xu2019training}
Zhi-Qin~John Xu, Yaoyu Zhang, and Yanyang Xiao.
\newblock Training behavior of deep neural network in frequency domain.
\newblock In \emph{International Conference on Neural Information Processing},
  pages 264--274. Springer, 2019.

\bibitem[Zhang et~al.(2020)Zhang, Xu, Luo, and Ma]{zhang2020type}
Yaoyu Zhang, Zhi-Qin~John Xu, Tao Luo, and Zheng Ma.
\newblock A type of generalization error induced by initialization in deep
  neural networks.
\newblock In \emph{Mathematical and Scientific Machine Learning}, pages
  144--164. PMLR, 2020.

\bibitem[Liu et~al.(2022)Liu, Michaud, and Tegmark]{liu2022omnigrok}
Ziming Liu, Eric~J. Michaud, and Max Tegmark.
\newblock Omnigrok: Grokking beyond algorithmic data, 2022.

\bibitem[Woodworth et~al.(2020)Woodworth, Gunasekar, Lee, Moroshko, Savarese,
  Golan, Soudry, and Srebro]{pmlr-v125-woodworth20a}
Blake Woodworth, Suriya Gunasekar, Jason~D. Lee, Edward Moroshko, Pedro
  Savarese, Itay Golan, Daniel Soudry, and Nathan Srebro.
\newblock Kernel and rich regimes in overparametrized models.
\newblock In Jacob Abernethy and Shivani Agarwal, editors, \emph{Proceedings of
  Thirty Third Conference on Learning Theory}, volume 125 of \emph{Proceedings
  of Machine Learning Research}, pages 3635--3673. PMLR, 09--12 Jul 2020.
\newblock URL \url{https://proceedings.mlr.press/v125/woodworth20a.html}.

\end{thebibliography}

\section*{Checklist}

\begin{enumerate}

\item For all authors...
\begin{enumerate}
  \item Do the main claims made in the abstract and introduction accurately reflect the paper's contributions and scope?
    \answerYes{}
  \item Did you describe the limitations of your work?
    \answerYes{}
  \item Did you discuss any potential negative societal impacts of your work?
    \answerNA{}
  \item Have you read the ethics review guidelines and ensured that your paper conforms to them?
    \answerYes{}
\end{enumerate}

\item If you are including theoretical results...
\begin{enumerate}
  \item Did you state the full set of assumptions of all theoretical results?
    \answerYes{}
        \item Did you include complete proofs of all theoretical results?
    \answerYes{}
\end{enumerate}

\item If you ran experiments...
\begin{enumerate}
  \item Did you include the code, data, and instructions needed to reproduce the main experimental results (either in the supplemental material or as a URL)?
    \answerYes{}
  \item Did you specify all the training details (e.g., data splits, hyperparameters, how they were chosen)?
    \answerYes{}
        \item Did you report error bars (e.g., with respect to the random seed after running experiments multiple times)?
    \answerYes{}
        \item Did you include the total amount of compute and the type of resources used (e.g., type of GPUs, internal cluster, or cloud provider)?
    \answerYes{All experiments were run on a workstation with two NVIDIA A6000 GPUs within a few days.}
\end{enumerate}

\item If you are using existing assets (e.g., code, data, models) or curating/releasing new assets...
\begin{enumerate}
  \item If your work uses existing assets, did you cite the creators?
    \answerNA{}
  \item Did you mention the license of the assets?
    \answerNA{}
  \item Did you include any new assets either in the supplemental material or as a URL?
    \answerNA{}
  \item Did you discuss whether and how consent was obtained from people whose data you're using/curating?
    \answerNA{}
  \item Did you discuss whether the data you are using/curating contains personally identifiable information or offensive content?
    \answerNA{}
\end{enumerate}

\item If you used crowdsourcing or conducted research with human subjects...
\begin{enumerate}
  \item Did you include the full text of instructions given to participants and screenshots, if applicable?
    \answerNA{}
  \item Did you describe any potential participant risks, with links to Institutional Review Board (IRB) approvals, if applicable?
    \answerNA{}
  \item Did you include the estimated hourly wage paid to participants and the total amount spent on participant compensation?
    \answerNA{}
\end{enumerate}

\end{enumerate}

\newpage
\appendix

{\huge Appendix}

\section{Definitions of the phases of learning}
\label{app:defs_table}
\begin{table}[h]
  \caption{Definitions of the four phases of learning}
  \label{tab:four_phases}
  \centering
  \begin{tabular}{cccc}
    \toprule
    & \multicolumn{3}{c}{criteria}\\ \cmidrule(r){2-4}
    Phase  & \makecell{training acc > 90\% \\within $10^5$ steps} & \makecell{validation acc > 90\% \\within $10^5$ steps} & \makecell{step(validation acc>90\%)\\$-$step(training acc>90\%)<$10^3$} \\
    \midrule
{\bf Comprehension}   &  Yes & Yes & Yes \\
    {\bf Grokking} & Yes & Yes & No \\
    {\bf Memorization} & Yes & No & Not  Applicable \\
    {\bf Confusion} & No & No & Not Applicable\\ \bottomrule
  \end{tabular}
\end{table}

\section{Applicability of our toy setting}\label{app:applicability}

In the main paper, we focused on the toy setting with (1) the addition dataset and (2) the addition operation hard coded in the decoder. Although both simplifications appear to have quite limited applicability, we argue below that the analysis of the toy setting can actually apply to all Abelian groups.

{\bf The addition dataset is the building block of all Abelian groups}
A cyclic group is a group that is generated by a single element. A finite cyclic group with order $n$ is $C_n=\{e, g, g^2,\cdots, g^{n-1}\}$ where $e$ is the identify element and $g$ is the generator and $g^i=g^j$ whenever $i=j\ ({\rm mod\ }n)$. The modulo addition and $\{0,1,\cdots,n-1\}$ form a cyclic group with $e=0$ and $g$ can be any number $q$ coprime to $n$ such that $(q,n)=1$. Since algorithmic datasets contain only symbolic but no arithmetic information, the datasets of modulo addition could apply to all other cyclic groups, e.g., modulo multiplication and discrete rotation groups in 2D.

Although not all Abelian groups are cyclic, a finite Abelian group $G$ can be always decomposed into a direct product of $k$ cyclic groups $G=C_{n_1}\times C_{n_2}\cdots C_{n_k}$. So after training $k$ neural networks with each handling one cyclic group separately, it is easy to construct a larger neural network that handles the whole Abelian group.

{\bf The addition operation is valid for all Abelian groups}
It is proved in ~\cite{zaheer2017deep} that for a permutation invariant function $f(x_1,x_2,\cdots,x_n)$, there exists $\rho$ and $\phi$ such that
\begin{equation}
    f(x_1,x_2,\cdots,x_n) = \rho[\sum_{i=1}^n\phi(x_i)],
\end{equation}
or $f(x_1,x_2)=\rho(\phi(x_1)+\phi(x_2))$ for $n=2$. Notice that $\phi(x_i)$ corresponds to the embedding vector $\mathbf{E}_i$, $\rho$ corresponds to the decoder. The addition operator naturally emerges from the commutativity of the operator, not restricting the operator itself to be addition. For example, multiplication of two numbers $x_1$ and $x_2$ can be written as $x_1x_2={\rm exp}({\rm ln}(x_1)+{\rm ln}(x_2))$ where $\rho(x)={\rm exp}(x)$ and $\phi(x)={\rm ln}(x)$.

\section{An illustrative example}\label{app:illustration}

We use a concrete case to illustrate why parallelograms lead to generalization (see Figure~\ref{fig:lattice}). For the purpose of illustration, we exploit a curriculum learning setting, where a neural network is fed with a few new samples each time. We will illustrate that, as we have more samples in the training set, the ideal model $\mathcal{M}^*$ (defined in Section \ref{sec:effective_theory}) will arrange the representation $\mat{R}^*$ in a more structured way, i.e., more parallelograms are formed, which helps generalization to unseen validation samples. For simplicity we choose $p=6$.
\begin{itemize}
    \item  $D_1=(0,4)$ and $D_2=(1,3)$ have the same label, so $(0,4,1,3)$ becomes a parallelogram such that $\mat{E}_0+\mat{E}_4=\mat{E}_1+\mat{E}_3\to\mat{E}_3-\mat{E}_0=\mat{E}_4-\mat{E}_1$.  $D_3=(1,5)$ and $D_4=(2,4)$ have the same label, so $(1,5,2,4)$ becomes a parallelogram such that $\mat{E}_1+\mat{E}_5=\mat{E}_2+\mat{E}_4\to\mat{E}_4-\mat{E}_1=\mat{E}_5-\mat{E}_2$. We can derive from the first two equations that $\mat{E}_5-\mat{E}_2=\mat{E}_3-\mat{E}_0\to\mat{E}_0+\mat{E}_5=\mat{E}_2+\mat{E}_3$, which implies that $(0,5,2,3)$ is also a parallelogram (see Figure~\ref{fig:lattice}(a)). This means if $(0,5)$ in training set, our model can predict $(2,3)$ correctly.
    \item  $D_5=(0,2)$ and $D_6=(1,1)$ have the same label, so $\mat{E}_0+\mat{E}_2=2\mat{E}_1$, i.e., $1$ is the middle point of $0$ and $2$ (see Figure~\ref{fig:lattice}(b)). Now we can derive that $2\mat{E}_4=\mat{E}_3+\mat{E}_5$, i.e., $4$ is the middle point of $3$ and $5$. If $(4,4)$ is in the training data, our model can predict $(3,5)$ correctly.
    \item Finally, $D_7=(2,4)$ and $D_8=(3,3)$ have the same label, so $2\mat{E}_3=\mat{E}_2+\mat{E}_4$, i.e., $3$ should be placed at the middle point of $2$ and $4$, ending up Figure~\ref{fig:lattice}(c). This linear structure agrees with the arithmetic structure of $\mathbb{R}$.
\end{itemize}
In summary, although we have $p(p+1)/2=21$ different training samples for $p=6$, we only need $8$ training samples to uniquely determine the perfect linear structure (up to linear transformation). The punchline is: representations lead to generalization.

\begin{figure}[H]
    \centering
    \includegraphics[width=1.0\linewidth]{./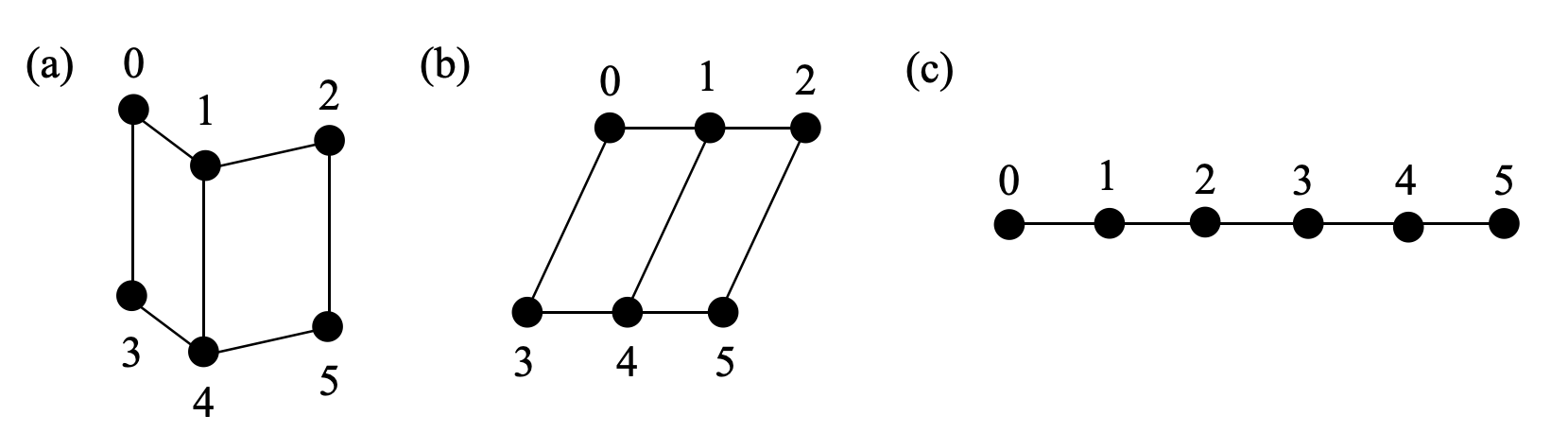}
    \caption{As we include more data in the training set, the (ideal) model is capable of discovering increasingly structured representations (better RQI), from (a) to (b) to (c).}
    \label{fig:lattice}
\end{figure}

\section{Definition of \texorpdfstring{$\widehat{\rm Acc}$}{Acc}}\label{app:acc}
Given a training set $D$ and a representation $\mat{R}$, if $(i,j)$ is a validation sample, can the neural network correctly predict its output, i.e., ${\rm Dec}(\mat{E}_i+\mat{E}_j)=\mat{Y}_{i+j}$? Since neural network has never seen $(i,j)$ in the training set, one possible mechanism of induction is through
\begin{equation}
    {\rm Dec}(\mathbf{E}_i+\mathbf{E}_j)={\rm Dec}(\mathbf{E}_m+\mathbf{E}_n)=\mathbf{Y}_{m+n}(=\mathbf{Y}_{i+j}).
\end{equation}
The first equality ${\rm Dec}(\mathbf{E}_i+\mathbf{E}_j)={\rm Dec}(\mathbf{E}_m+\mathbf{E}_n)$ holds only when $\mathbf{E}_i+\mathbf{E}_j=\mathbf{E}_m+\mathbf{E}_n$ (i.e., $(i,j,m,n)$ is a parallelogram). The second equality $\rm{Dec}(\mathbf{E}_m+\mathbf{E}_n)=\mathbf{Y}_{m+n}$, holds when $(m,n)$ is in the training set, i.e., $(m,n)\in D$, under the zero training loss assumption. Rigorously, given a training set $D$ and a parallelogram set $P$ (which can be calculated from $\mat{R}$), we collect all zero loss samples in an \textit{augmented} training set $\overline{D}$
\begin{equation}\label{eq:augmented_D}
    \overline{D}(D,P)=D\bigcup \{(i,j)|\exists (m,n)\in D, (i,j,m,n)\in P\}.
\end{equation}
Keeping $D$ fixed, a larger $P$ would probably produce a larger $\overline{D}$, i.e., if $P_1\subseteq P_2$, then $\overline{D}(D,P_1)\subseteq \overline{D}(P,P_2)$, which is why in Eq.~(\ref{eq:RQI}) our defined ${\rm RQI}\propto |P|$ gets its name ``representation quality index", because higher RQI normally means better generalization. 
Finally, the expected accuracy from a dataset $D$ and a parallelogram set $P$ is:
\begin{equation}
    \widehat{{\rm Acc}} = \frac{|\overline{D}(D,P)|}{|D_0|},
\end{equation}
which is the estimated accuracy (of the full dataset), and $P=P(\mat{R})$ is defined on the representation after training. On the other hand, accuracy ${\rm Acc}$ can be accessed empirically from trained neural network.
We verified ${\rm Acc}\approx \widehat{\rm Acc}$ in a toy setup (addition dataset $p=10$, 1D embedding space, hard code addition), as shown in Figure~\ref{fig:Acc_DP} (c). Figure~\ref{fig:Acc_DP} (a)(b) show ${\rm Acc}$ and $\widehat{\rm Acc}$ as a function of training set ratio, with each dot corresponding to a different random seed. The dashed red diagonal corresponds to memorization of the training set, and the vertical gap refers to generalization.

Although the agreement is good for 1D embedding vectors, we do not expect such agreement can trivially extend to high dimensional embedding vectors. In high dimensions, our definition of RQI is too restrictive. For example, suppose  we have an embedding space with $N$ dimensions. Although the representation may form a linear structure in the first dimension, the representation can be arbitrary in other $N-1$ dimensions, leading to $\text{RQI}\approx 0$. However, the model may still generalize well if the decoder learns to keep only the useful dimension and drop all other $N-1$ useless dimensions. It would be interesting to investigate how to define an RQI that takes into account the role of decoder in future works.

\section{The gap of a realistic model \texorpdfstring{$\mathcal{M}$}{M} and the ideal model \texorpdfstring{$\mathcal{M}^*$}{M*}}\label{app:ideal-gap}

Realistic models $\mathcal{M}$ usually form fewer number of parallelograms than ideal models $\mathcal{M}^*$. In this section, we analyze the properties of ideal models and calculated ideal RQI and ideal accuracy, which set upper bounds for empirical RQI and accuracy. The upper bound relations are verified via numerical experiments in Figure~\ref{fig:ideal_results}.

Similar to Eq.~(\ref{eq:augmented_D}) where some validation samples can be derived from training samples, we demonstrate how \textit{implicit parallelograms} can be `derived' from explicit ones in $P_0(D)$. The so-called derivation follows a simple geometric argument that: if $A_1B_1$ is equal and parallel to $A_2B_2$, and $A_2B_2$ is equal and parallel to $A_3B_3$, then we can deduce that $A_1B_1$ is equal and parallel to $A_3B_3$ (hence $(A_1,B_2,A_2,B_1)$ is a parallelogram).

Recall that a parallelogram $(i,j,m,n)$ is equivalent to $\mathbf{E}_i+\mathbf{E}_j=\mathbf{E}_m+\mathbf{E}_n$ $(*)$. So we are equivalently asking if equation $(*)$ can be expressed as a linear combination of equations in $A(P_0(D))$. If yes, then $(*)$ is dependent on $A(P_0(D))$ (defined in Eq.~(\ref{eq:A})), i.e., $A(P_0(D))$ and $A(P_0(D)\bigcup{(i,j,m,n)})$ should have the same rank. We augment $P_0(D)$ by adding implicit parallelograms, and denote the augmented parallelogram set as
\begin{equation}\label{eq:PD}
    P(D)=P_0(D)\bigcup \{q\equiv(i,j,m,n)|q\in P_0, {\rm rank}(A(P_0(D)))={\rm rank}(A(P_0(D)\bigcup q))\}.
\end{equation}
We need to emphasize that an assumption behind Eq.~(\ref{eq:PD}) is that we have an ideal model $\mathcal{M}^*$. When the model is not ideal, e.g., when the injectivity of the encoder breaks down, fewer parallelograms are expected to form, i.e.,
\begin{equation}
    P(R)\subseteq P(D).
\end{equation}
The inequality is saying, whenever a parallelogram is formed in the representation after training, the reason is hidden in the training set. This is not a strict argument, but rather a belief that today's neural networks can only copy what datasets (explicitly or implicitly) tell it to do, without any autonomous creativity or intelligence. For simplicity we call this belief \textit{Alexander Principle}. In very rare cases when something lucky happens (e.g., neural networks are initialized at approximate correct weights), Alexander principle may be violated. Alexander principle sets an upper bound for ${\rm RQI}$:
\begin{equation}\label{eq:AP_RQI}
    {\rm RQI}(R)\leq  \frac{|P(D)|}{|P_0|}\equiv \overline{\rm RQI},
\end{equation}
and sets an upper bound for $\widehat{\rm Acc}$:
\begin{equation}\label{eq:AP_Acc}
    \widehat{\rm Acc}\equiv \widehat{\rm Acc}(D,P(R))\leq \widehat{\rm Acc}(D,P(D))\equiv\overline{{\rm Acc}}.
\end{equation}

\begin{figure}
    \centering
    \includegraphics[width=1\linewidth]{./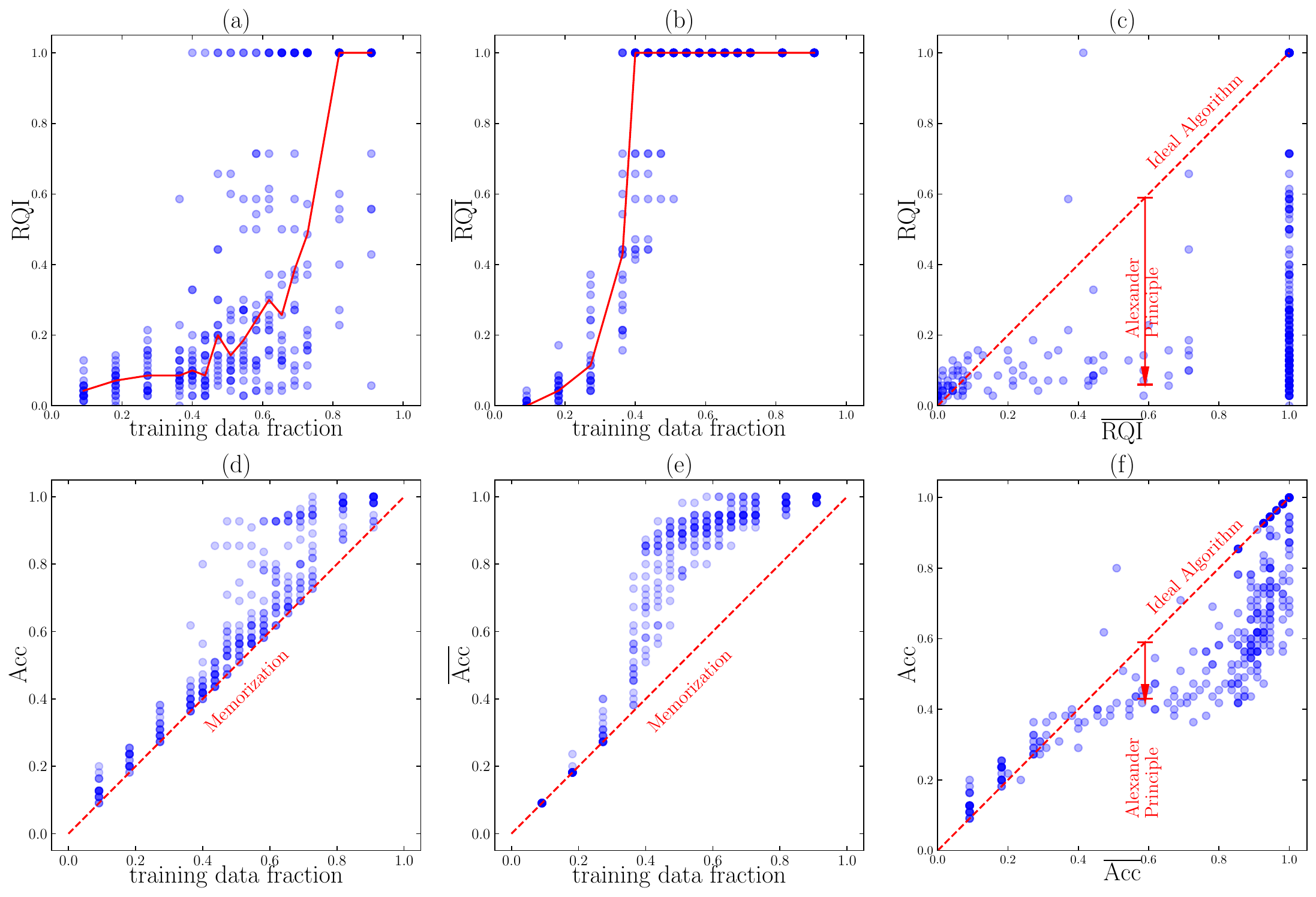}
    \caption{We compare RQI and Acc for an ideal algorithm (with bar) and a realistic algorithm (without bar). In (a)(b)(d)(e), four quantities (RQI, $\overline{\rm RQI}$, Acc, $\overline{\rm Acc}$) as functions of training data fraction are shown. In (c)(f), RQI and Acc of the ideal algorithm sets upper bounds for those of the realistic algorithm.}
    \label{fig:ideal_results}
\end{figure}

In Figure~\ref{fig:ideal_results} (c)(f), we verify Eq.~(\ref{eq:AP_RQI}) and Eq.~(\ref{eq:AP_Acc}). We choose $\delta=0.01$ to compute RQI($R$,$\delta$). We find the trained models are usually far from being ideal, although we already include a few useful tricks proposed in Section \ref{sec:phase_diagram} to enhance representation learning. It would be an interesting future direction to develop better algorithms so that the gap due to Alexander principle can be reduced or even closed. In Figure~\ref{fig:ideal_results} (a)(b)(d)(e), four quantities (RQI, $\overline{\rm RQI}$, Acc, $\overline{\rm Acc}$) as functions of the training data fraction are shown, each dot corresponding to one random seed. It is interesting to note that it is possible to have $\overline{\rm RQI}=1$ only with $<40\%$ training data, i.e., $55\times 0.4= 22$ samples, agreeing with our observation in Section \ref{sec:represenation}.

{\bf Realistic representations} Suppose an ideal model $\mathcal{M}^*$ and a realistic model $\mathcal{M}$ which train on the training set $D$ give the representation $R^*$ and $R$, respectively. What is the relationship between $R$ and $R_*$? Due to the Alexander principle we know $P(R)\subseteq P(D)=P(R^*)$. This means $R^*$ has more parallelograms than $R$, hence $R^*$ has fewer degrees of freedom than $R$. 

We illustrate with the toy case $p=4$. The whole dataset contains $p(p+1)/2=10$ samples, i.e.,
\begin{equation}
    D_0=\{(0,0),(0,1),(0,2),(0,3),(1,1),(1,2),(1,3),(2,2),(2,3),(3,3)\}.
\end{equation}
The parallelogram set contains only three elements, i.e.,
\begin{equation}
    P_0 = \{(0,1,1,2),(0,1,2,3),(1,2,2,3)\},
\end{equation}
Or equivalently the equation set
\begin{equation}
    A_0=\{{\rm A1:\mat{E}_0+\mat{E}_2=2\mat{E}_1,A2:\mat{E}_0+\mat{E}_3=\mat{E}_1+\mat{E}_2,A3:\mat{E}_1+\mat{E}_3=2\mat{E}_2}\}.
\end{equation}
Pictorially, we can split all possible subsets $\{A|A\subseteq A_0\}$ into different levels, each level defined by $|A|$ (the number of elements). A subset $A_1$ in the $i^{\rm th}$ level points an direct arrow to another subset $A_2$ in the $(i+1)^{\rm th}$ level if $A_2\subset A_1$, and we say $A_2$ is a child of $A_1$, and $A_1$ is a parent of $A_2$. Each subset $A$ can determine a representation $R$ with $n(A)$ degrees of freedom. So $R$ should be a descendant of $R_*$, and $n(R_*)\leq n(R)$. Numerically, $n(A)$ is equal to the dimension of the null space of $A$.

\begin{figure}
    \centering
    \includegraphics[width=0.8\linewidth]{./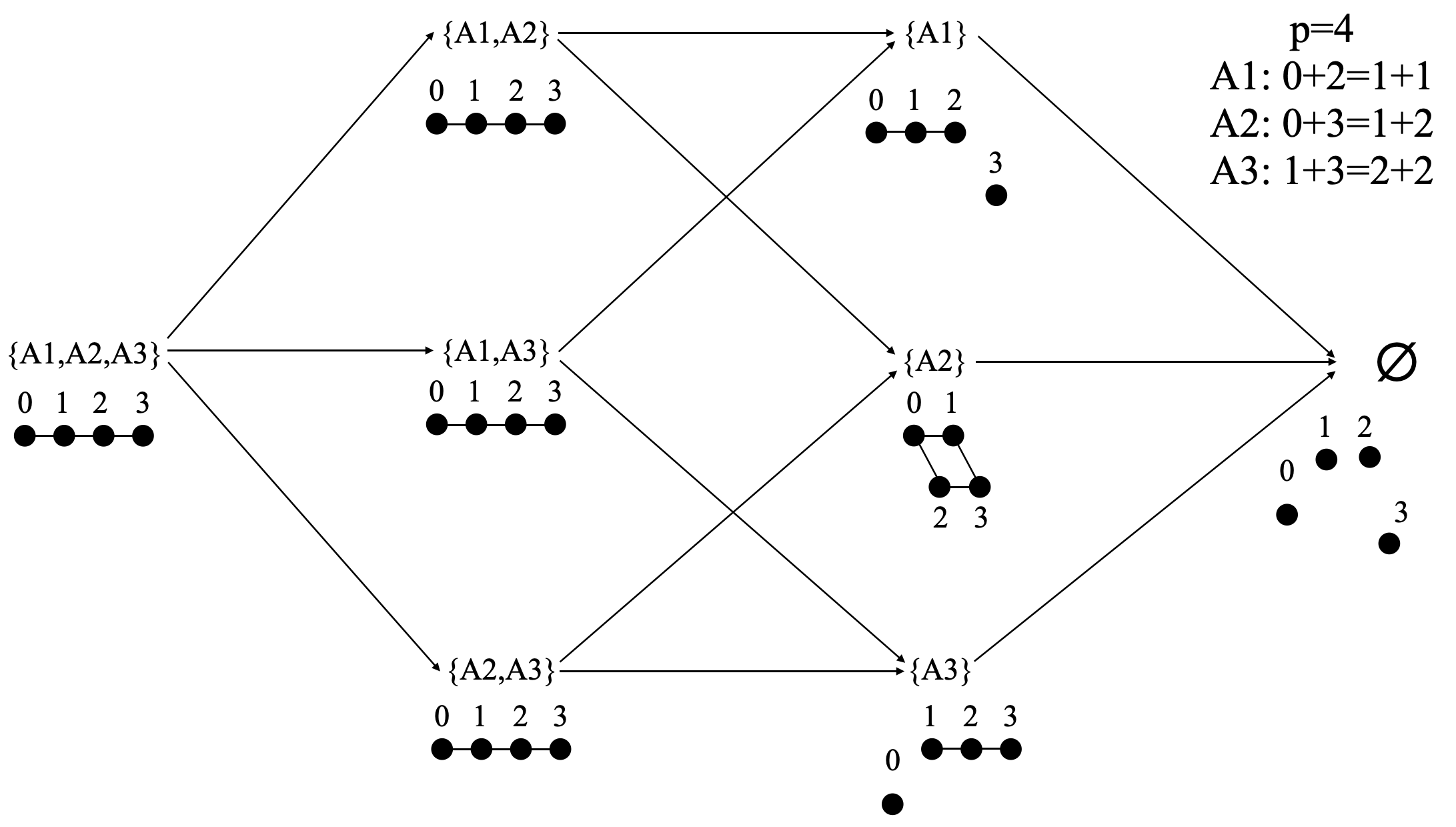}
    \caption{$p=4$ case. Equation set $A$ (or geometrically, representation) has a hierarchy: $a\to b$ means $a$ is a parent of $b$, and $b$ is a child of $a$. A realistic model can only generate representations that are descendants of the representation generated by an ideal model.}
    \label{fig:p4_illustrate}
\end{figure}

\begin{figure}
    \centering
    \includegraphics[width=0.48\linewidth]{./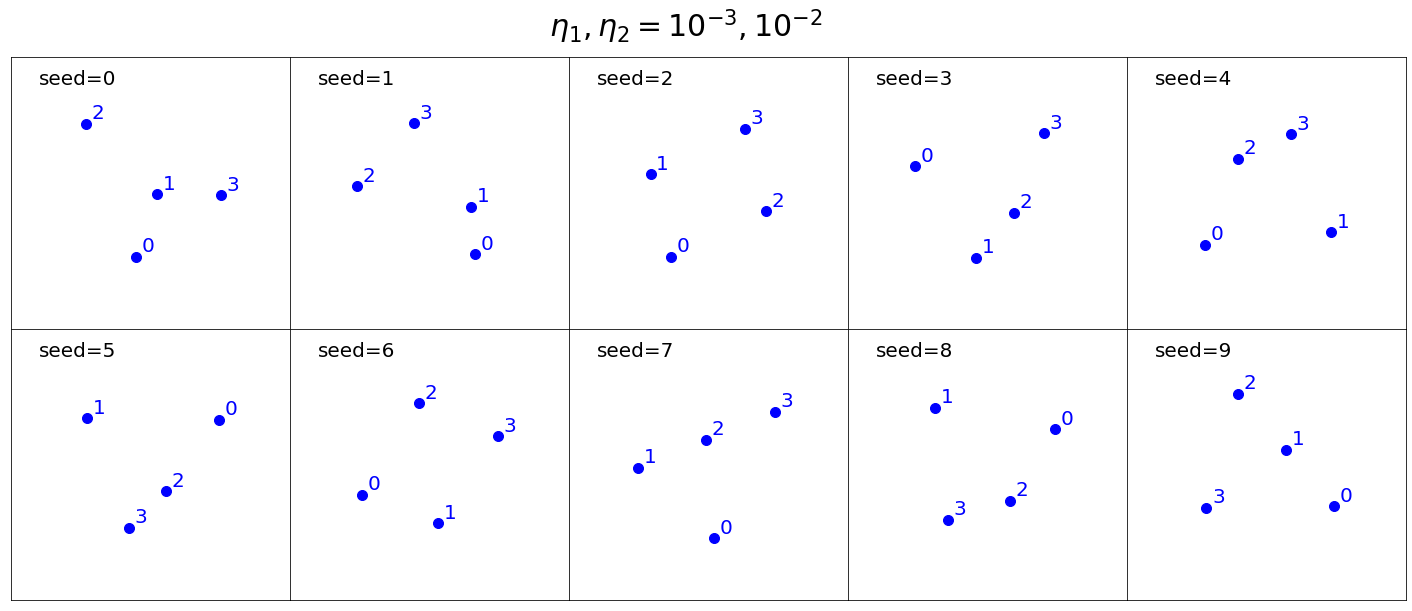}
    \includegraphics[width=0.48\linewidth]{./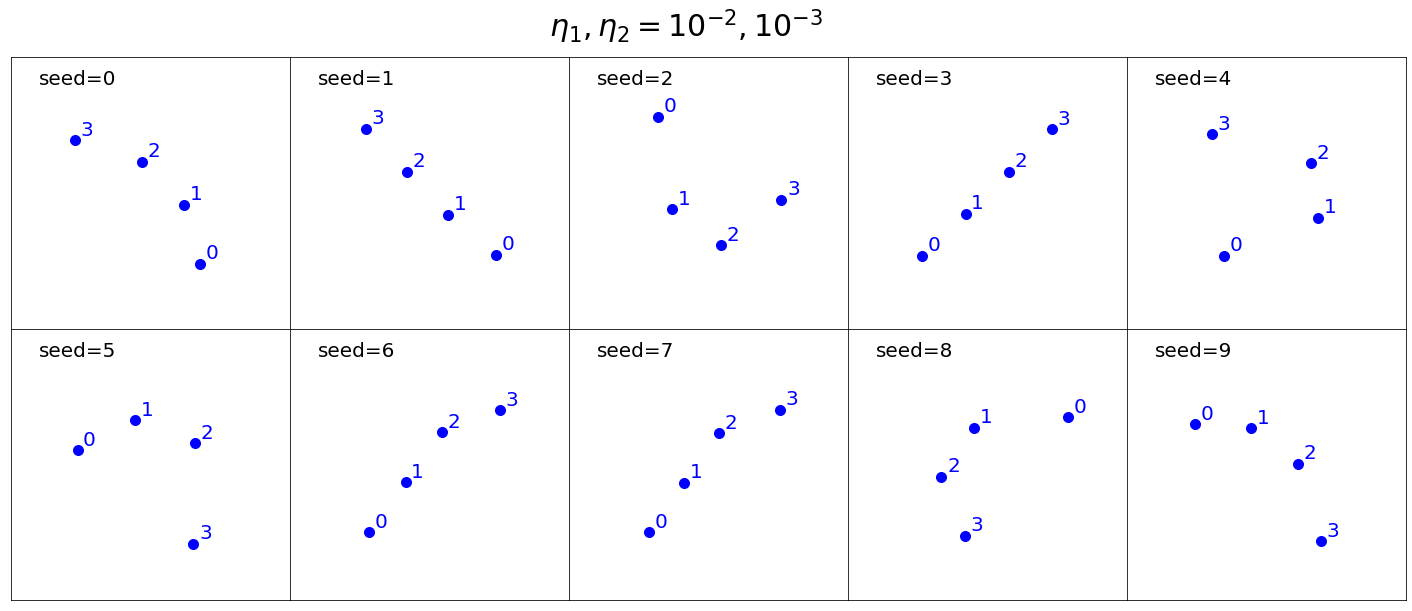}
    \caption{$p=4$ case. Representations obtained from training neural networks are displayed. $\eta_1$ and $\eta_2$ are learning rates of the representation and the decoder, respectively. As described in the main text, $(\eta_1,\eta_2)=(10^{-2},10^{-3})$ (right) is more ideal than $(\eta_1,\eta_2)=(10^{-3},10^{-2})$ (left), thus producing representations containing more parallelograms.}
    \label{fig:p4_exp}
\end{figure}

Suppose we have a training set
\begin{equation}
    D=\{(0,2),(1,1),(0,3),(1,2),(1,3),(2,2)\},
\end{equation}
and correspondingly $P(D)=P_0, A(P)=A_0$. So an ideal model $\mathcal{M}_*$ will have the linear structure $\mat{E}_k=\mat{a}+k\mat{b}$ (see Figure~\ref{fig:p4_illustrate} leftmost). However, a realistic model $\mathcal{M}$ may produce any descendants of the linear structure, depending on various hyperparameters and even random seeds.

In Figure~\ref{fig:p4_exp}, we show our algorithms actually generates all possible representations. We have two settings: (1) fast decoder $(\eta_1,\eta_2)=(10^{-3},10^{-2})$ (Figure~\ref{fig:p4_exp} left), and (2) relatively slow decoder $(\eta_1,\eta_2)=(10^{-2},10^{-3})$ (Figure~\ref{fig:p4_exp}) right). The relatively slow decoder produces better representations (in the sense of higher RQI) than a fast decoder, agreeing with our observation in Section \ref{sec:phase_diagram}.

\section{Conservation laws of the effective theory}\label{app:conservation-law}

Recall that the effective loss function
\begin{equation}\label{eq:l_eff_app}
    \ell_{\text{eff}} = \frac{\ell_0}{Z_0},\quad  \ell_0\equiv\sum_{(i,j,m,n)\in P_0(D)}|\mathbf{E}_i+\mathbf{E}_j-\mathbf{E}_m-\mathbf{E}_n|^2/|P_0(D)|,\quad  Z_0\equiv \sum_{k}|\mathbf{E}_k|^2
\end{equation}
where $\ell_0$ and $Z_0$ are both quadratic functions of $R=\{\mathbf{E}_0,\cdots, \mathbf{E}_{p-1}\}$, and $\ell_{\text{eff}}=0$ remains zero under rescaling and translation $\mathbf{E}_i'=a\mathbf{E}_i+\mat{b}$. We will ignore the $1/|P_0(D)|$ factor in $\ell_0$ since having it is equivalent to rescaing time, which does not affect conservation laws. The representation vector $\mathbf{E}_i$ evolves according to the gradient descent
\begin{equation}\label{eq:Ei_dynamics_app}
    \frac{d\mathbf{E}_i}{dt} = -\frac{\partial \ell_{\text{eff}}}{\partial \mathbf{E}_i}.
\end{equation}
We will prove the following two quantities are conserved:
\begin{equation}
    \mathbf{C}=\sum_k \mathbf{E}_k, \quad 
    Z_0 = \sum_{k} |\mathbf{E}_k|^2.
\end{equation}
Eq.~(\ref{eq:l_eff_app}) and Eq.~(\ref{eq:Ei_dynamics_app}) give
\begin{equation}
    \frac{d\mathbf{E}_i}{dt}=-\frac{\ell_{\text{eff}}}{\partial \mathbf{E}_i}=-\frac{\partial (\frac{\ell_0}{Z_0})}{\partial \mathbf{E}_i}=-\frac{1}{Z_0}\frac{\partial \ell_0}{\partial \mathbf{E}_i} + \frac{\ell_0}{Z_0^2}\frac{\partial Z_0}{\partial \mathbf{E}_i}.
\end{equation}
Then
\begin{align}\label{eq:Z0_conservation}
    \frac{dZ_0}{dt} &= 2\sum_{i} \mathbf{E}_k\cdot\frac{d\mathbf{E}_k}{dt}\\ \nonumber
    &=\frac{2}{Z_0^2}\sum_{i} \mathbf{E}_i\cdot (-Z_0\frac{\partial\ell_0}{\partial \mathbf{E}_k}+2\ell_0\mathbf{E}_k)\\ \nonumber
    &=\frac{2}{Z_0}(-\sum_k \frac{\partial \ell_0}{\partial \mathbf{E}_k}\cdot \mathbf{E}_k+2\ell_0)\\ \nonumber
    &=0.
\end{align}

where the last equation uses the fact that 
\begin{align*}
  \sum_k \frac{\partial \ell_0}{\partial \mathbf{E}_k}\cdot \mathbf{E}_k &= 2 \sum_{k}\sum_{(i,j,m,n)\in P_0(D)}(\mathbf{E}_i+\mathbf{E}_j-\mathbf{E}_m-\mathbf{E}_n)(\delta_{ik}+\delta_{jk}-\delta_{mk}-\delta_{nk}) \cdot \mathbf{E}_k\\
  &= 2 \sum_{(i,j,m,n)\in P_0(D)}(\mathbf{E}_i+\mathbf{E}_j-\mathbf{E}_m-\mathbf{E}_n)\sum_{k}(\delta_{ik}+\delta_{jk}-\delta_{mk}-\delta_{nk}) \cdot \mathbf{E}_k\\
  & =  \sum_{(i,j,m,n)\in P_0(D)}(\mathbf{E}_i+\mathbf{E}_j-\mathbf{E}_m-\mathbf{E}_n) \cdot (\mathbf{E}_{i}+\mathbf{E}_{j}-\mathbf{E}_{m}-\mathbf{E}_{n})\\
  & = 2\ell_0
\end{align*}

The conservation of $Z_0$ prohibits the representation from collapsing to zero. Now that we have demonstrated that $Z_0$ is a conserved quantity, we can also show
\begin{align}
    \frac{d\mathbf{C}}{dt} &= \sum_{k} \frac{d\mathbf{E}_k}{dt}\\ \nonumber
    &= -\frac{1}{Z_0}\sum_k\frac{\partial \ell_0}{\partial \mathbf{E}_k}\\ \nonumber
    &=-\frac{2}{Z_0}\sum_{k}\sum_{(i,j,m,n)\in P_0(D)}(\mathbf{E}_i+\mathbf{E}_j-\mathbf{E}_m-\mathbf{E}_n)(\delta_{ik}+\delta_{jk}-\delta_{mk}-\delta_{nk})\\ \nonumber
    &=\bf0.
\end{align}
The last equality holds because the two summations can be swapped and $\sum_{k}(\delta_{ik}+\delta_{jk}-\delta_{mk}-\delta_{nk})=0$.

\section{More phase diagrams of the toy setup}\label{app:more_toy_pd}
We study another three hyperparameters in the toy setup by showing phase diagrams similar to Figure~\ref{fig:grokking_pd}. The toy setup is: (1) addition without modulo ($p=10$); (2) training/validation is split into 45/10; (3) hard code addition; (4) 1D embedding. In the following experiments, the decoder is an MLP with size 1-200-200-30. The representation and the encoder are optimized with AdamW with different hyperparameters. The learning rate of the representation is $10^{-3}$. We sweep the learning rate of the decoder in range $[10^{-4},10^{-2}]$ as the x axis, and sweep another hyperparameter as the y axis. By default, we use full batch size 45, initialization scale $s=1$ and zero weight decay of representation.

{\bf Batch size} controls the amount of noise in the training dynamics. In Figure~\ref{fig:grokking_pd_bs}, the grokking region appears at the top left of the phase diagram (small decoder learning rate and small batch size). However, large batch size (with small learning rate) leads to comprehension, implying that smaller batch size seems harmful. This makes sense since to get crystals (good structures) in experiments, one needs a freezer which gradually decreases temperature, rather than something perturbing the system with noise.

\begin{figure}
    \centering
    \includegraphics[width=0.42\linewidth]{./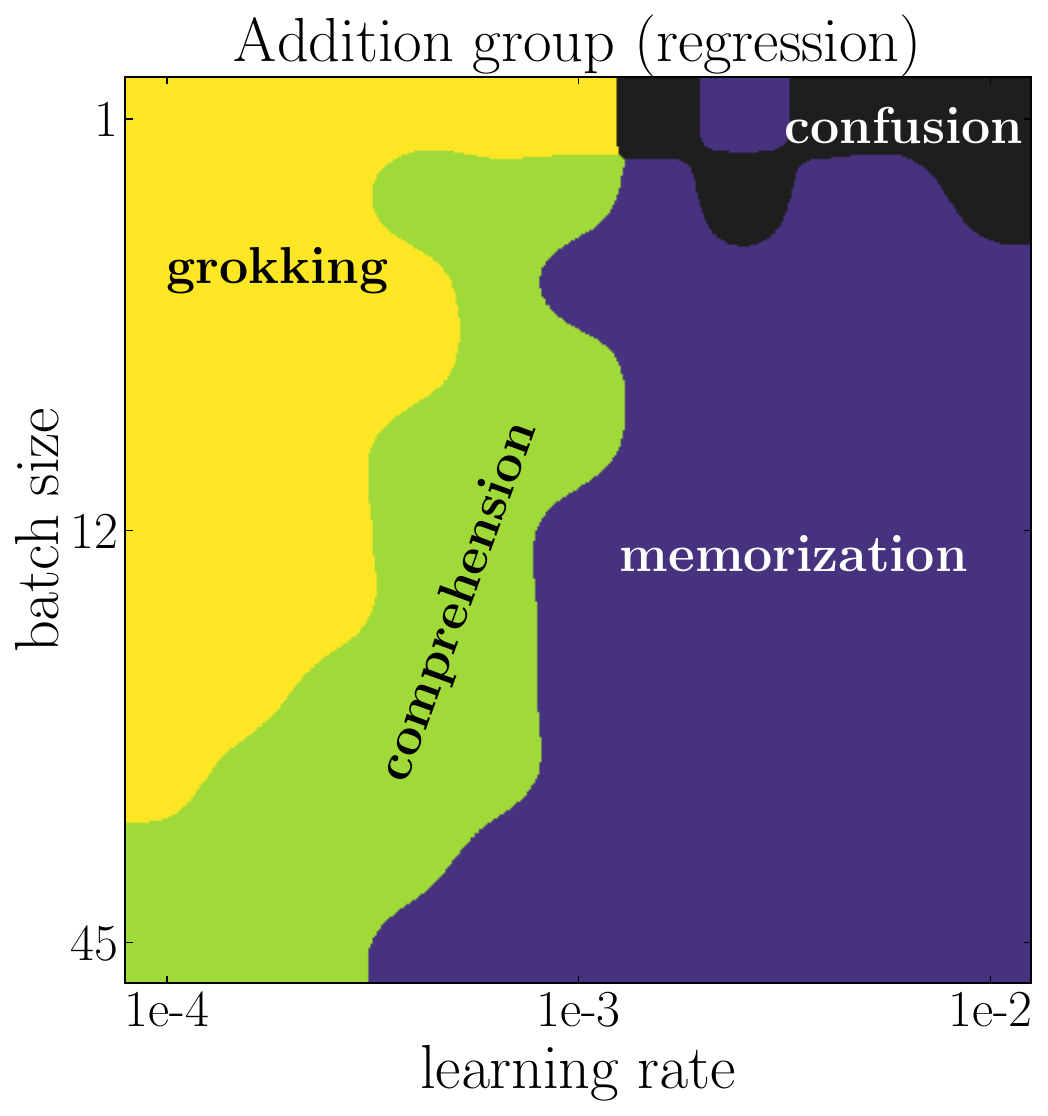}
    \includegraphics[width=0.42\linewidth]{./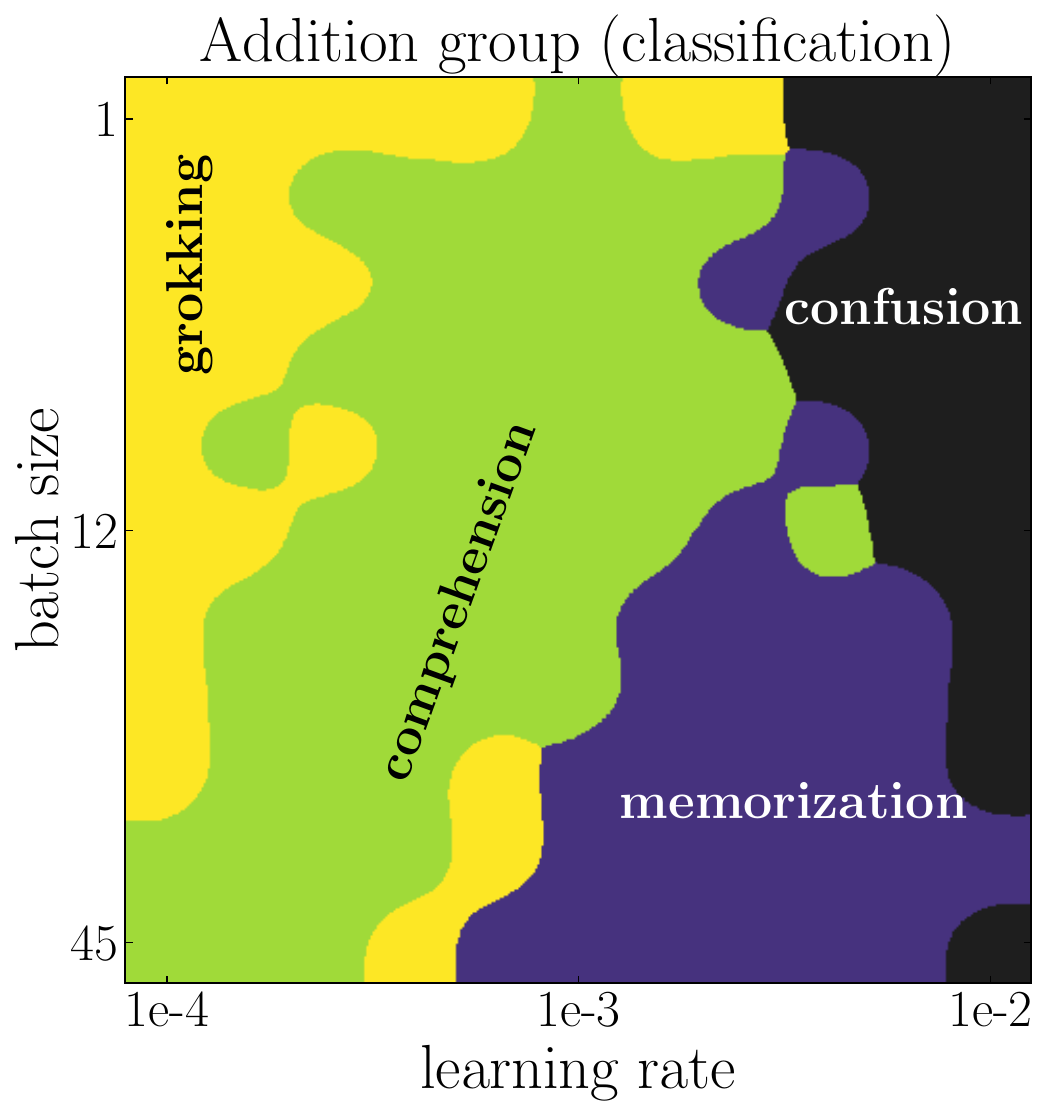}
    \caption{Phase diagrams of  decoder learning rate (x axis) and batch size (y axis) for the addition group (left: regression; right: classification). Small decoder leanrning rate and large batch size (bottom left) lead to comprehension.}
    \label{fig:grokking_pd_bs}
\end{figure}

{\bf Initialization scale} controls distances among embedding vectors at initialization. We initialize components of embedding vectors from independent uniform distribution $U[-s/2,s/2]$ where $s$ is called the initialization scale. Shown in Figure~\ref{fig:grokking_pd_init}, it is beneficial to use a smaller initialization scale. This agrees with the physical intuition that closer particles are more likely to interact and form structures. For example, the distances among molecules in ice are much smaller than distances in gas.
\begin{figure}
    \centering
    \includegraphics[width=0.45\linewidth]{./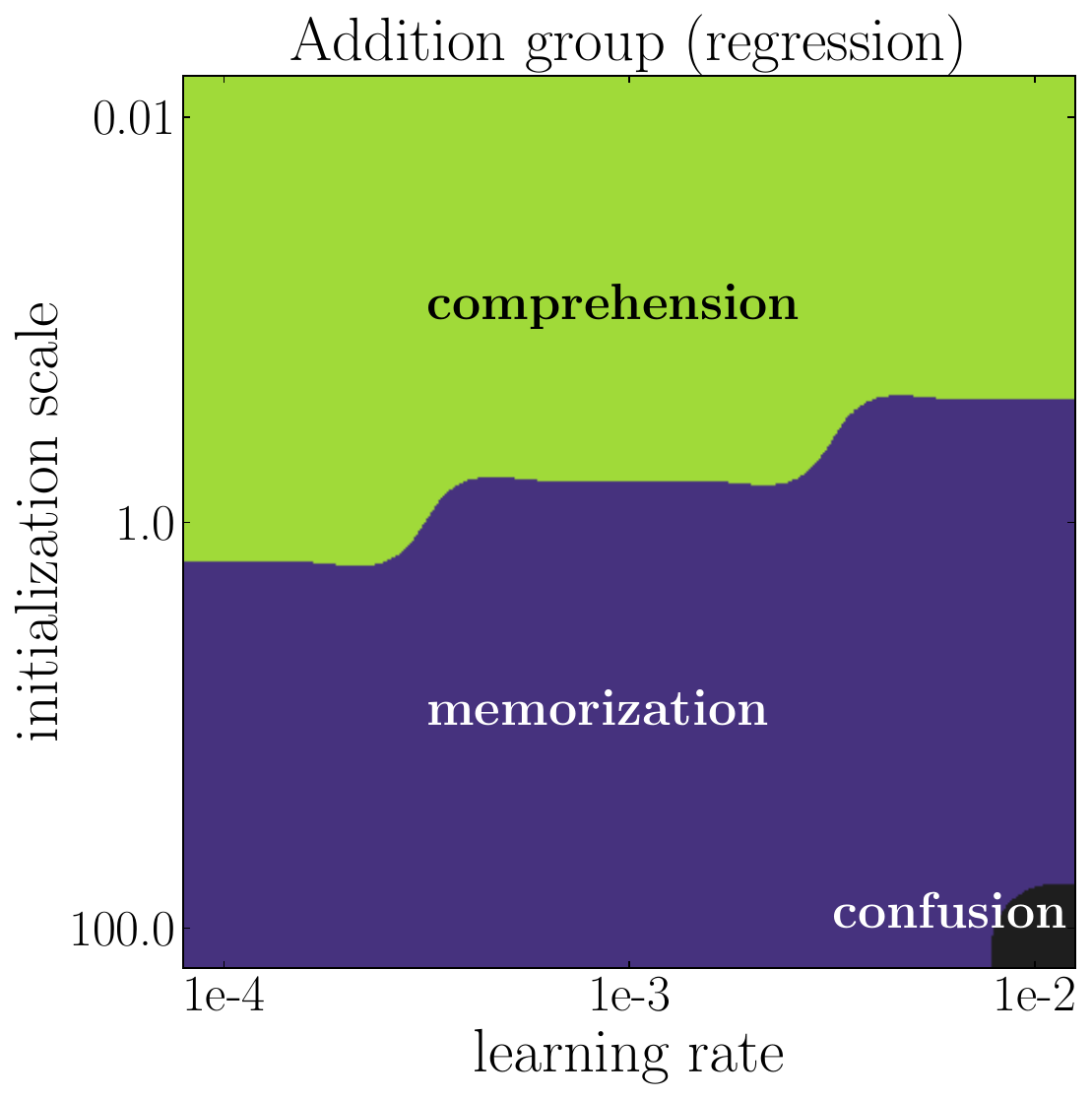}
    \includegraphics[width=0.45\linewidth]{./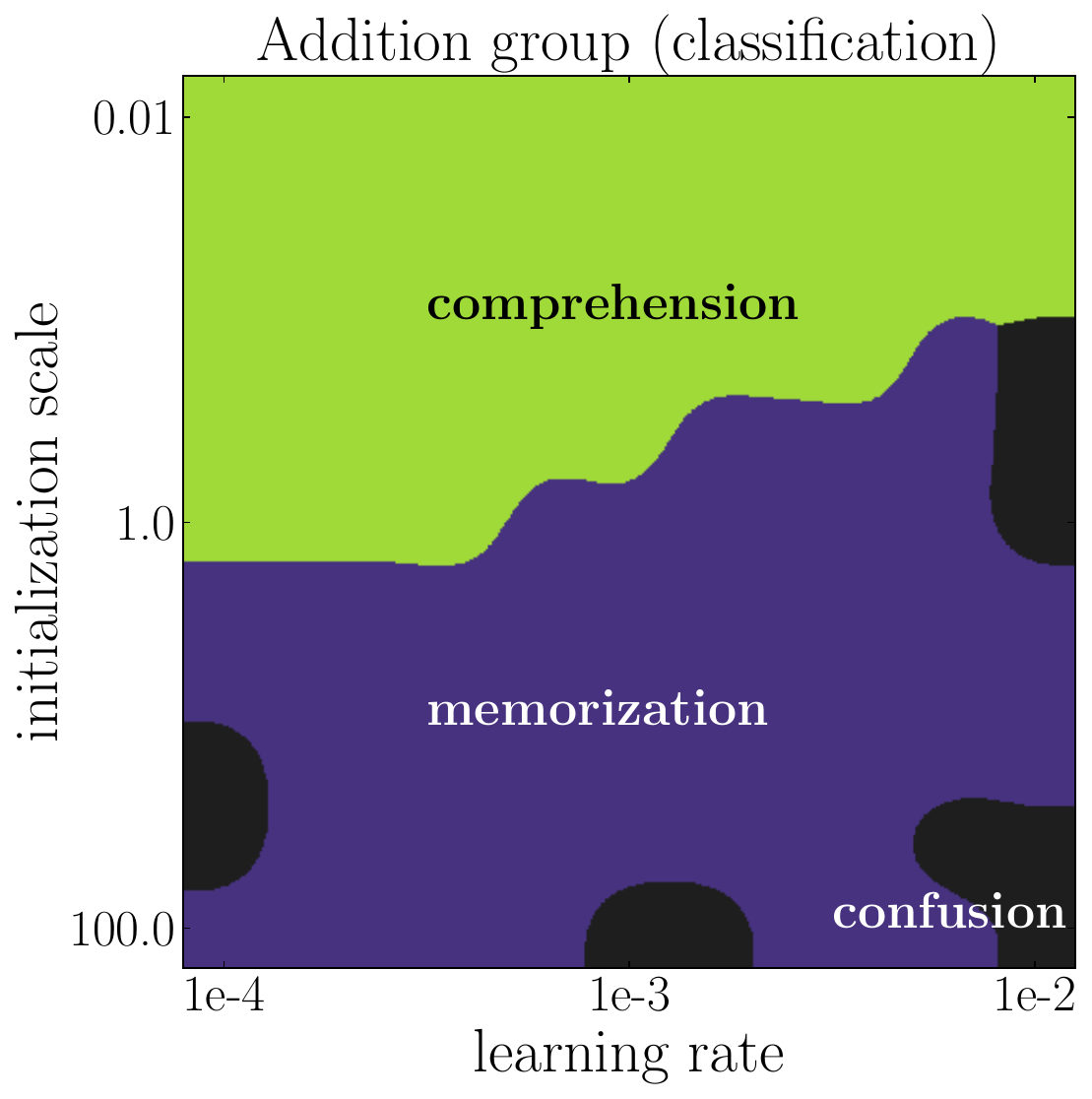}
    \caption{Phase diagrams of  decoder learning rate (x axis) and initialization (y axis) for the addition group (left: regression; right: classification). Small intialization scale (top) leads to comprehension.}
    \label{fig:grokking_pd_init}
\end{figure}

{\bf Representation weight decay} controls the magnitude of embedding vectors. Shown in Figure~\ref{fig:grokking_pd_reprwd}, we see the representation weight decay in general does not affect model performance much.
\begin{figure}
    \centering
    \includegraphics[width=0.45\linewidth]{./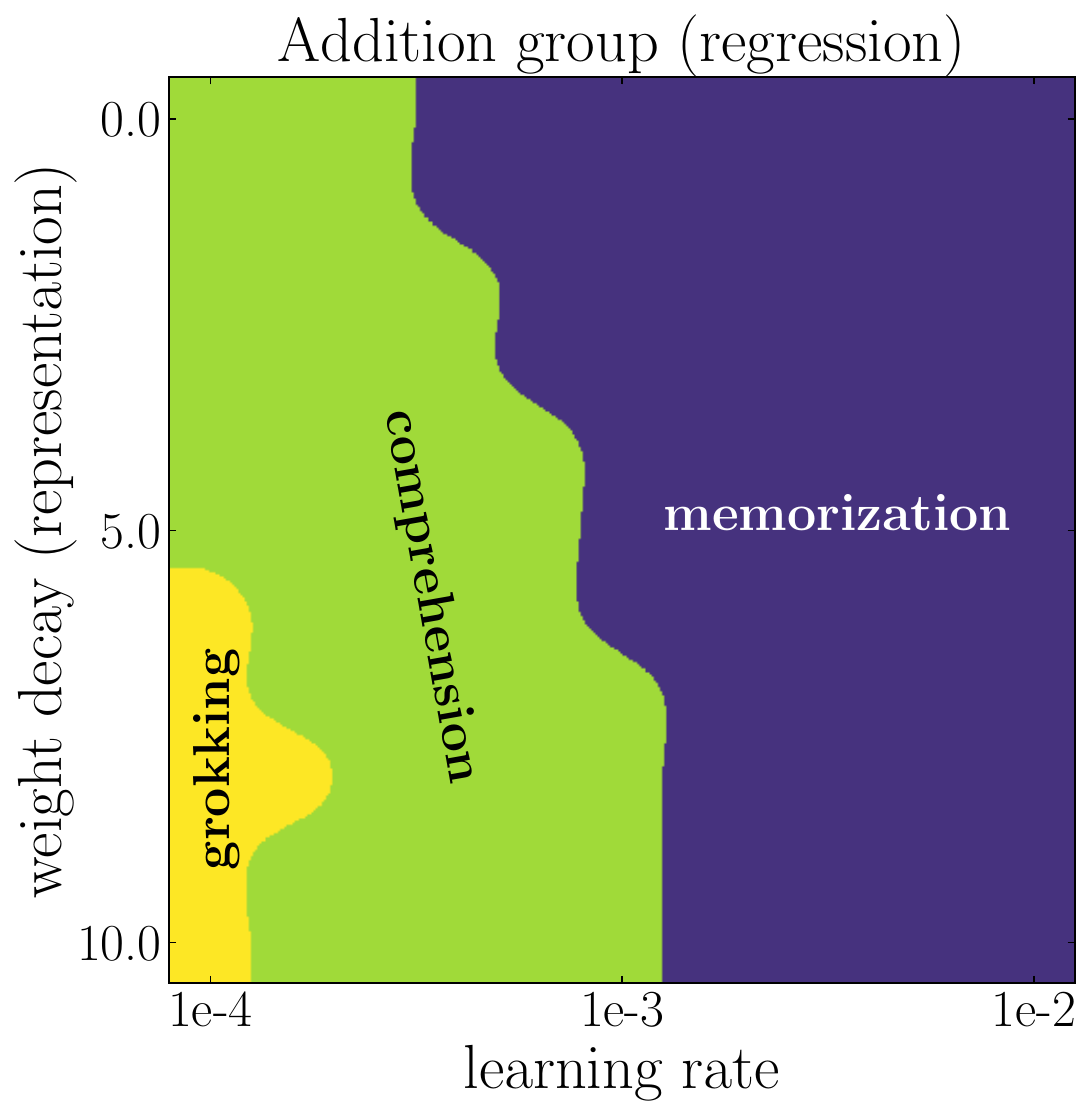}
    \includegraphics[width=0.45\linewidth]{./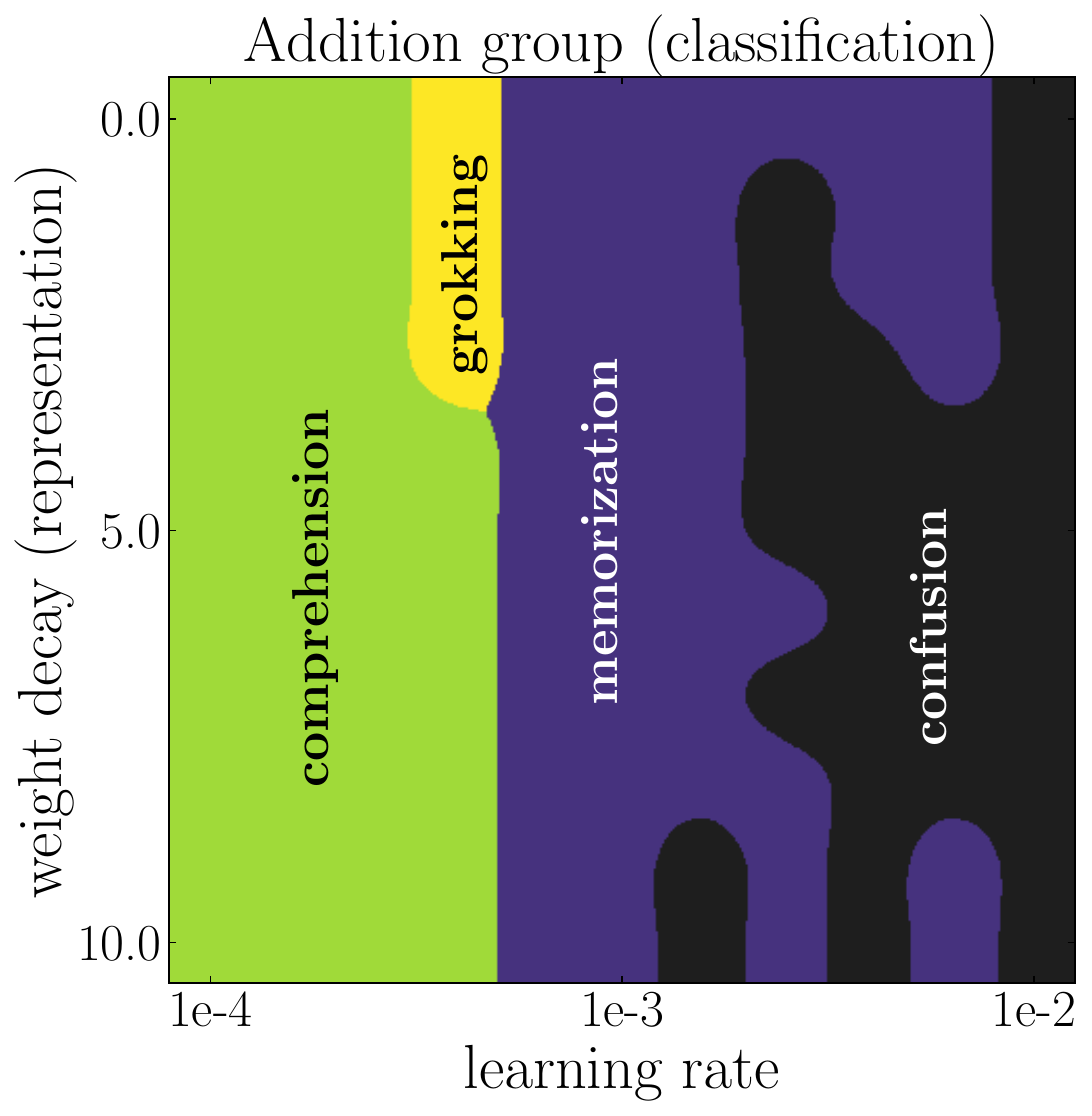}
    \caption{Phase diagrams of  decoder learning rate (x axis) and representation weight decay (y axis) for the addition group (left: regression; right: classification). Representation weight decay  does not affect model performance much.}
    \label{fig:grokking_pd_reprwd}
\end{figure}

\section{General groups}\label{app:non-abelian}

\subsection{Theory}
We focused on Abelian groups for the most part of the paper. This is, however, simply due to pedagogical reasons. In this section, we show that it is straight-forward to extend definitions of parallelograms and representation quality index (RQI) to general non-Abelian groups. We will also show that most (if not all) qualitative results for the addition group also apply to the permutation group.

{\bf Matrix representation for general groups} Let us first review the definition of group representation. A representation of a group $G$ on a vector space $V$ is a group homomorphism from $G$ to ${\rm GL}(V)$, the general linear group on $V$. That is, a representation is a map $\rho:G\to{\rm GL}(V)$ such that
\begin{equation}
    \rho(g_1g_2)=\rho(g_1)\rho(g_2),\quad \forall g_1,g_2\in G.
\end{equation}
In the case $V$ is of finite dimension $n$, it is common to identify ${\rm GL}(V)$ with $n$ by $n$ invertible matrices. The punchline is that: each group element can be represented as a matrix, and the binary operation is represented as matrix multiplication.

{\bf A new architecture for general groups} Inspired by the matrix representation, we embed each group element $a$ as a learnable matrix $\mat{E}_a\in\mathbb{R}^{d\times d}$ (as opposed to a vector), and manually do matrix multiplication before sending the product to the deocder for regression or classification. More concretly, for $a\circ b=c$, our architecture takes as input two embedding matrices $\mat{E}_a$ and $\mat{E}_b$ and aims to predict $\mat{Y}_c$ such that $\mat{Y}_c={\rm Dec}(\mat{E}_a\mat{E}_b)$, where $\mat{E}_a\mat{E}_b$ means the matrix multiplication of $\mat{E}_a$ and $\mat{E}_b$. The goal of this simplication is to disentangle learning the representation and learning the arithmetic operation (i.e, the matrix multiplication). We will show that, even with this simplification, we are still able to reproduce the characteristic grokking behavior and other rich phenomenon.

{\bf Generalized parallelograms} we define generalized parallelograms:  $(a,b,c,d)$ is a generalized parallelogram in the representation if $||\mat{E}_a\mat{E}_b-\mat{E}_c\mat{E}_d||_F^2\leq\delta$, where $\delta>0$ is a threshold to tolerate numerical errors. Before presenting the numerical results for the permutation group, we show an intuitive picture about how new parallelograms can be deduced from old ones for general groups, which is the key to generalization.

\begin{figure}
    \centering
    \includegraphics[width=1\linewidth]{./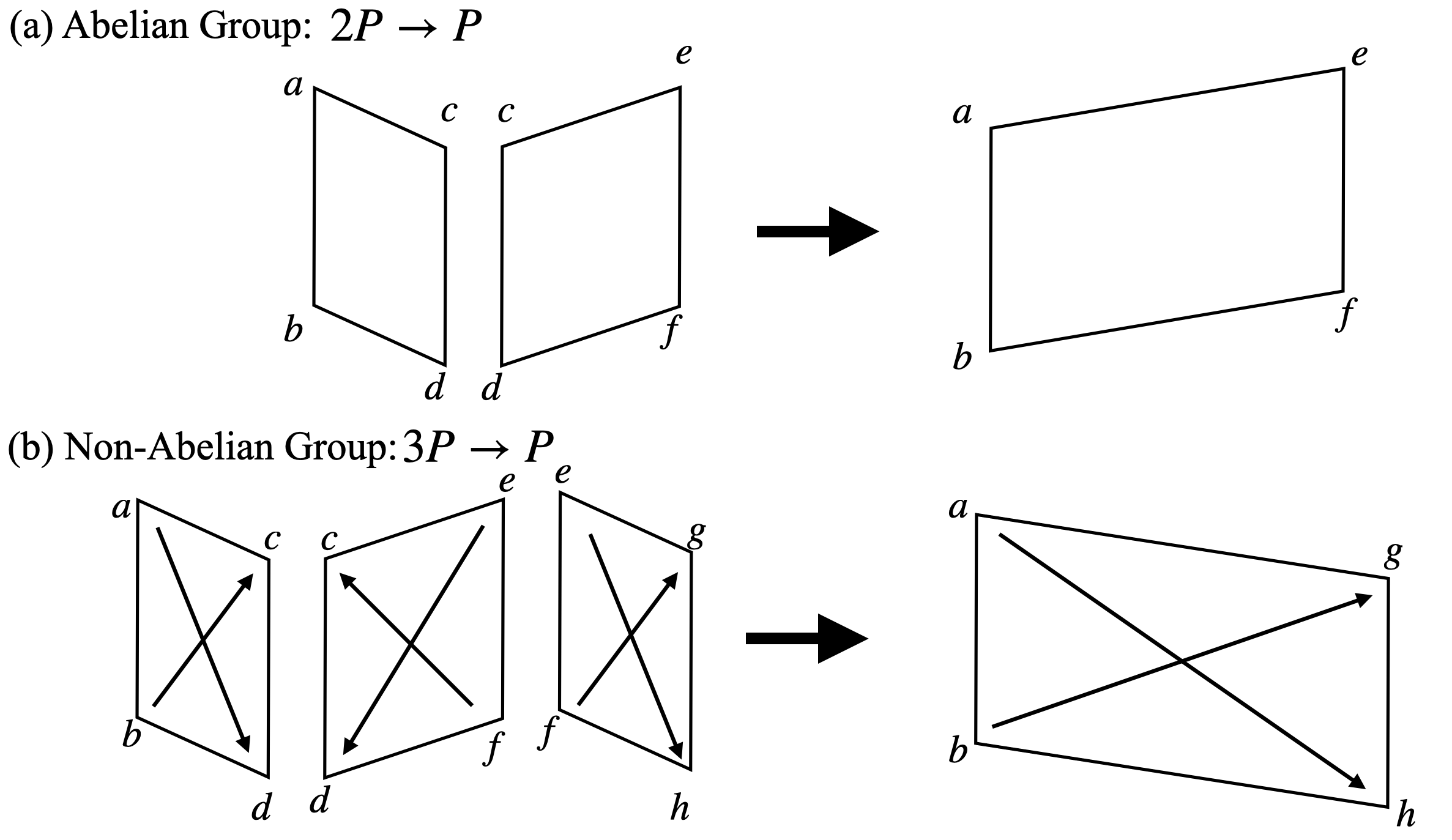}
    \caption{Deduction of parallelograms}
    \label{fig:P_deduction}
\end{figure}

{\bf Deduction of parallelograms} We first recall the case of the Abelian group (e.g., addition group). As shown in Figure~\ref{fig:P_deduction}, when $(a,d,b,c)$ and $(c,f,d,e)$ are two parallelograms, we have
\begin{equation}
    \begin{aligned}
    \mat{E}_a + \mat{E}_d = \mat{E}_b + \mat{E}_c, \\
    \mat{E}_c + \mat{E}_f = \mat{E}_d + \mat{E}_d. \\
    \end{aligned}
\end{equation}
We can derive that $\mat{E}_a + \mat{E}_f = \mat{E}_b + \mat{E}_e$ implying that $(a,f,b,e)$ is also a parallelogram. That is, for Abelian groups, two parallelograms are needed to deduce a new parallelogram.

For the non-Abelian group, if we have only two parallelograms such that
\begin{equation}
    \begin{aligned}
    \mat{E}_a\mat{E}_d = \mat{E}_b\mat{E}_c, \\
    \mat{E}_f\mat{E}_c = \mat{E}_e\mat{E}_d,
    \end{aligned}
\end{equation}
we have $\mat{E}_b^{-1}\mat{E}_a=\mat{E}_c\mat{E}_d^{-1}=\mat{E}_f^{-1}\mat{E}_e$, but this does not lead to something like $\mat{E}_f\mat{E}_a=\mat{E}_e\mat{E}_b$, hence useless for generalization. However, if we have a third parallelogram such that
\begin{equation}
    \mat{E}_e\mat{E}_h = \mat{E}_f\mat{E}_g
\end{equation}
we have $\mat{E}_b^{-1}\mat{E}_a=\mat{E}_c\mat{E}_d^{-1}=\mat{E}_f^{-1}\mat{E}_e=\mat{E}_g\mat{E}_h^{-1}$, equivalent to $\mat{E}_a\mat{E}_h=\mat{E}_b\mat{E}_g$, thus establishing a new parallelogram $(a,h,b,g)$. That is, for non-Abelian groups, three parallelograms are needed to deduce a new parallelogram.

\subsection{Numerical Results}

In this section, we conduct numerical experiments on a simple non-abelian group: the permutation group $S_3$. The group has 6 group elements, hence the full dataset contains 36 samples. We embed each group element $a$ into a learnable $3\times 3$ embedding matrix $\mat{E}_a$. We adopt the new architecture described in the above subsection: we hard code matrix multiplication of two input embedding matrices before feeding to the decoder. After defining the generalized parallelogram in the last subsection, we can continue to define RQI (as in Section \ref{sec:represenation}) and predict accuracy $\widehat{\rm Acc}$ from representation
(as in appendix \ref{app:acc}). We also compute the number of steps needed to reach ${\rm RQI}=0.95$.

{\bf Representation} We flatten each embedding matrix into a vector, and apply principal component analysis (PCA) to the vectors. We show the first three principal components of these group elements in Figure~\ref{fig:permutation_repr}. On the plane of ${\rm PC}1$ and ${\rm PC}3$, the six points are organized as a hexagon. 
\begin{figure}[ht]
    \centering
    \includegraphics[width=1\linewidth]{./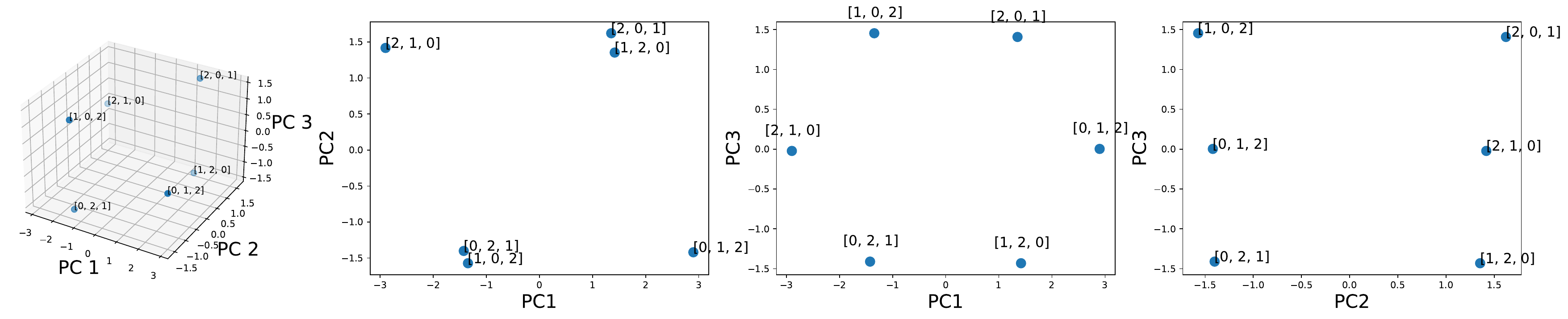}
    \caption{Permuation group $S_3$. First three principal components of six embedding matrices $\mathbb{R}^{3\times 3}$.}
    \label{fig:permutation_repr}
\end{figure}

\begin{figure}[ht]
    \centering
    \includegraphics[width=1\linewidth]{./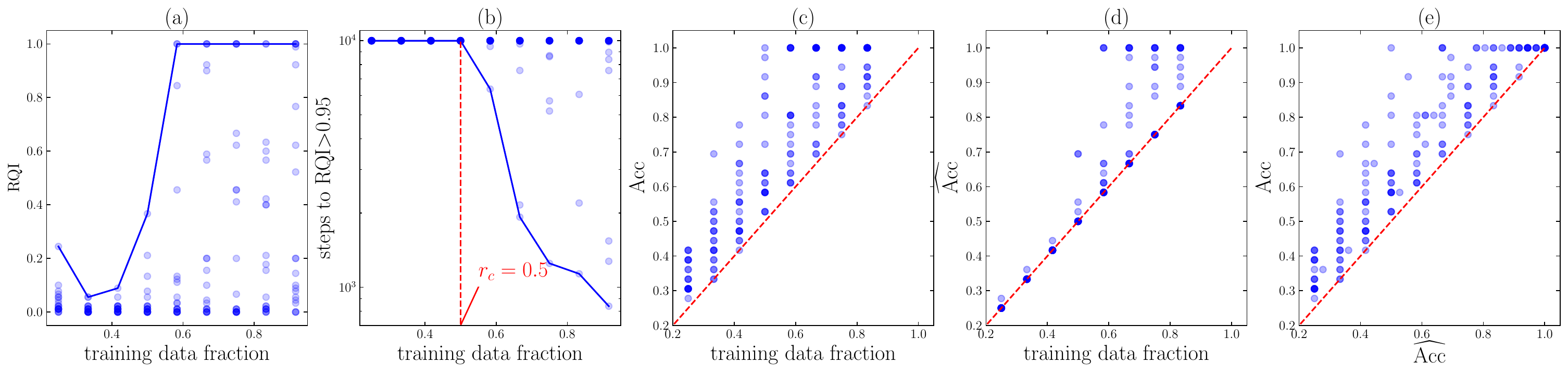}
    \caption{Permutation group $S_3$. (a) RQI increases as training set becomes larger. Each scatter point is a random seed, and the blue line is the highest RQI obtained with a fixed training set ratio; (b) steps to reach ${\rm RQI}>0.95$. The blue line is the smallest number of steps required. There is a phase transition around $r_c=0.5$. (c) real accuracy ${\rm Acc}$; (d) predicted accuracy $\widehat{\rm Acc}$; (e) comparison of ${\rm Acc}$ and $\widehat{\rm Acc}$: $\widehat{\rm Acc}$ serves as a lower bound of ${\rm Acc}$. }
    \label{fig:permutation_results}
\end{figure}

{\bf RQI} In Figure~\ref{fig:permutation_results} (a), we show RQI as a function of training data fraction. For each training data fraction, we run 11 random seeds (shown as scatter points), and the blue line corresponds to the highest RQI.

{\bf Steps to reach RQI$=0.95$} In Figure~\ref{fig:permutation_results} (b), we whow the steps to reach ${\rm RQI}>0.95$ as a function of training data fraction, and find a phase transition at $r=r_c=0.5$. The blue line corresponds to the best model (smallest number of steps).

{\bf Accuracy} The real accuracy ${\rm Acc}$ is shown in Figure~\ref{fig:permutation_results} (c), while the predicted accuracy $\widehat{\rm Acc}$ (calculated from RQI) is shown in Figure~\ref{fig:permutation_results} (d). Their comparison is shown in (e): $\widehat{\rm Acc}$ is a lower bound of ${\rm Acc}$, implying that there must be some generalization mechanism beyond RQI.

{\bf Phase diagram} We investigate how the model performance varies under the change of two knobs: decoder learning rate and decoder weight decay. We calculate the number of steps to training accuracy $\geq 0.9$ and validation accuracy $\geq 0.9$, respectively, shown in Figure~\ref{fig:grokking_pd} (d).

\section{Effective theory for image classification}\label{app:mnist_effective}

\begin{figure}[htbp]
    \centering
    \includegraphics[width=1.0\linewidth]{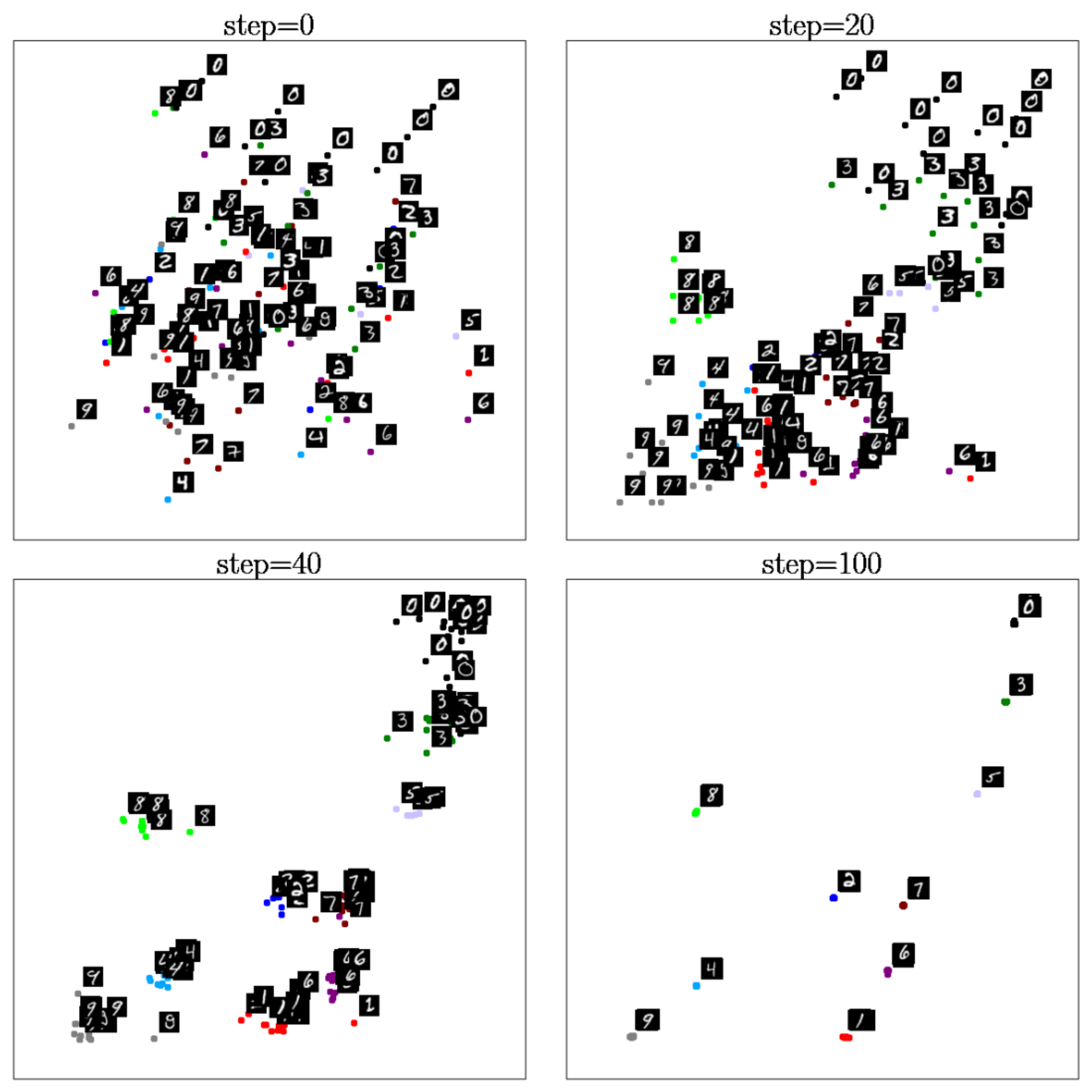}
    \caption{Our effective theory applies to MNIST image classifications. Same-class images collapse to their class-means, while  class-means of different classes stay separable. As such, the effective theory serves as a novel self-supervised learning method, as well as shed some light on neural collapse. Please see texts in Appendix \ref{app:mnist_effective}.}
    \label{fig:mnist_effective}
\end{figure}


In this section, we show our effective theory proposed in Section \ref{sec:effective_theory} can generalize beyond algorithmic datasets. In particular, we will apply the effective theory to image classifications. We find that: (i) The effective theory naturally gives rise to a novel self-supervised learning method, which can provably avoid mode collapse without contrastive pairs. (ii) The effective theory can shed light on the neural collapse phenomenon~\cite{papyan2020prevalence}, in which same-class representations collapse to their class-means.

We first describe how the effective theory applies to image classification. The basic idea is again that, similar to algorithmic datasets, neural networks try to develop a structured representation of the inputs based on the relational information between samples (class labels in the case of image classification, sum parallelograms in the case of addition, etc.).
The effective theory has two ingredients: (i) samples with the same label are encouraged to have similar representations; (ii) the effective loss function is scale-invariant to avoid all representations collapsing to zero (global collapse). As a result, an effective loss for image classification has the form

\begin{equation}\label{eq:effective_MNIST}
    \ell_{\rm eff} = \frac{\ell}{Z},\quad  \ell = \sum_{(\mat{x},\mat{y})\in P} |\mat{f}(\mat{x})-\mat{f}(\mat{y})|^2, \quad Z=\sum_{\mat{x}} |\mat{f}(\mat{x})|^2 
\end{equation}
where $\mat{x}$ is an image, $\mat{f}(\mat{x})$ is its representation, $(\mat{x},\mat{y})\in P$ refers to unique pairs $\mat{x}$ and $\mat{y}$ that have the same label. Scale invariance means the loss function $\ell_{\rm eff}$ does not change under the linear scaling $\mat{f}(\mat{x})\to a\mat{f}(\mat{x})$.

{\bf Relation to neural collapse} It was observed in ~\cite{papyan2020prevalence} that image representations in the penultimate layer of the model have some interesting features: (i) representations of same-class images collapse to their class-means; (ii) class-means of different classes develop into an equiangular tight frame. Our effective theory is able to predict the same-class collapse, but does not necessarily put class-means into equiangular tight frames. We conjecture that little explicit repulsion among different classes can help class-means develop into an equiangular tight frame, similar to electrons developing into lattice structures on a sphere under repulsive Coulomb forces (the Thomson problem ~\cite{enwiki:1091431454}). We would like to investigate this modification of the effective theory in the future.

{\bf Experiment on MNIST} We directly apply the effective loss Eq.~(\ref{eq:effective_MNIST}) to the MNIST dataset. Firstly, each image $\mat{x}$ is randomly encoded to a 2D embedding $\mat{f}(\mat{x})$ via the same encoder MLP whose weights are randomly initialized. We then train these embeddings by minimizing the effective loss $\ell_{\rm eff}$ with an Adam optimizer ($10^{-3}$ learning rate) for 100 steps. We show the evolution of these embeddings in Figure \ref{fig:mnist_effective}. Images of the same class collapse to their class-means, and different class-means do not collapse. This means that our effective theory can give rise to a good representation learning method which only exploits non-contrastive relational information in datasets.

{\bf Link to self-supervised learning} Note that $\ell$ itself is vulnerable to global collapse, in the context of Siamese learning without contrastive pairs. Various tricks (e.g., decoder with momentum, stop gradient) ~\cite{grill2020bootstrap, chen2021exploring} have been proposed to avoid global collapse. However, the reasons why these tricks can avoid global collapse are unclear. We argue $\ell$ fails simply because $\ell\to a^2\ell$ under scaling $\mat{f}(\mat{x})\to a\mat{f}(\mat{x})$ so gradient descent on $\ell$ encourage $a\to 0$. Based on this picture, our effective theory provides another possible fix: make the loss function $\ell$ scale-invariant (by the normalized loss $\ell_{\rm eff}$), so the gradient flow has no incentive to change representation scales. In fact, we can prove that the gradient flow on $\ell_{\rm eff}$ preserve $Z$ (variance of representations) so that global collapse is avoided provably:

\begin{equation}
\begin{aligned}
    \frac{\partial\ell_{\rm eff}}{\partial\mat{f}(\mat{x})} & = \frac{1}{Z}\frac{\partial\ell}{\partial\mat{f}(\mat{x})} - \frac{l}{Z^2}\frac{\partial Z}{\partial \mat{f}(\mat{x})}=\frac{2}{Z}\sum_{\mat{y}\sim\mat{x}}(\mat{f}(\mat{x})-\mat{f}(\mat{y}))-\frac{2\ell}{Z^2}\mat{f}(\mat{x}), \\
    \frac{dZ}{dt} & = 2 \sum_{\mat{x}}\mat{f}(\mat{x})\cdot \frac{d\mat{f}(\mat{x})}{dt} =2 \sum_{\mat{x}}\mat{f}(\mat{x})\cdot\frac{\partial\ell_{\rm eff}}{\partial\mat{f}(\mat{x})} \\
    & =\frac{4}{Z}\sum_{\mat{x}}\mat{f}(\mat{x})\cdot(\sum_{\mat{y}\sim\mat{x}}(\mat{f}(\mat{x})-\mat{f}(\mat{y}))-\frac{\ell}{Z}\mat{f}(\mat{x})) \\
    & = \frac{4}{Z}\big[\sum_{\mat{x}}\mat{f}(\mat{x})\cdot\sum_{\mat{y}\sim\mat{x}}(\mat{f}(\mat{x})-\mat{f}(\mat{y}))-\sum_{\mat{x}}\frac{\ell}{Z}|\mat{f}(\mat{x})|^2 \big]\\
    & = 0.
\end{aligned}
\end{equation}
where we use the fact that 
\begin{equation}
    \sum_{\mat{x}}\mat{f}(\mat{x})\cdot\sum_{\mat{y}\sim\mat{x}}(\mat{f}(\mat{x})-\mat{f}(\mat{y}))=\sum_{(\mat{x},\mat{y}) \in P} (\mat{f}(\mat{x})-\mat{f}(\mat{y}))\cdot (\mat{f}(\mat{x})-\mat{f}(\mat{y})) = \ell
\end{equation}

\section{Grokking on MNIST}
\label{app:mnist_grok}

To induce grokking on MNIST, we make two nonstandard decisions: (1) we reduce the size of the training set from 50k to 1k samples (by taking a random subset) and (2) we increase the scale of the weight initialization distribution (by multiplying the initial weights, sampled with Kaiming uniform initialization, by a constant $> 1$). 

The choice of large initializations is justified by ~\citep{xu2019training, zhang2020type,liu2022omnigrok} which find large initializations overfit data easily but prone to poor generalization. Relevant to this, initialization scale is found to regulate ``kernel'' vs ``rich'' learning regimes in networks~\cite{pmlr-v125-woodworth20a}.


With these modifications to training set size and initialization scale, we train a depth-3 width-200 MLP with ReLU activations with the AdamW optimizer. We use MSE loss with one-hot targets, rather than cross-entropy. With this setup, we find that the network quickly fits the train set, and then much later in training validation accuracy improves, as shown in Figure~\ref{fig:mnist-learning-curve}. This closely follows the stereotypical grokking learning, first observed in algorithmic datasets. 

With this setup, we also compute a phase diagram over the model weight decay and the last layer learning rate. See Figure~\ref{fig:mnist-phase}. While in MLPs it is less clear what parts of the network to consider the ``encoder'' vs the ``decoder'', for our purposes here we consider the last layer to be the ``decoder'' and vary its learning rate relative to the rest of the network. The resulting phase diagram has some similarity to Figure~\ref{fig:degrok}. We observe a ``confusion''phase in the bottom right (high learning rate and high weight decay), a ``comprehension'' phase bordering it, a ``grokking'' phase as one decreases weight decay and decoder learning rate, and a ``memorization`` phase at low weight decay and low learning rate. Instead of an accuracy threshold of 95\%, we use a threshold of 60\% here for validation accuracy for runs to count as comprehension or grokking. This phase diagram demonstrates that with sufficient regularization, we can again ``de-grok'' learning.



We also investigate the effect of training set size on time to generalization on MNIST. We find a result similar to what Power et al.~\cite{power2022grokking} observed, namely that generalization time increases rapidly once one drops below a certain amount of training data. See Figure~\ref{fig:generalization-vs-data-mnist}.

\begin{figure}[ht]
    \centering
    \includegraphics[width=3in]{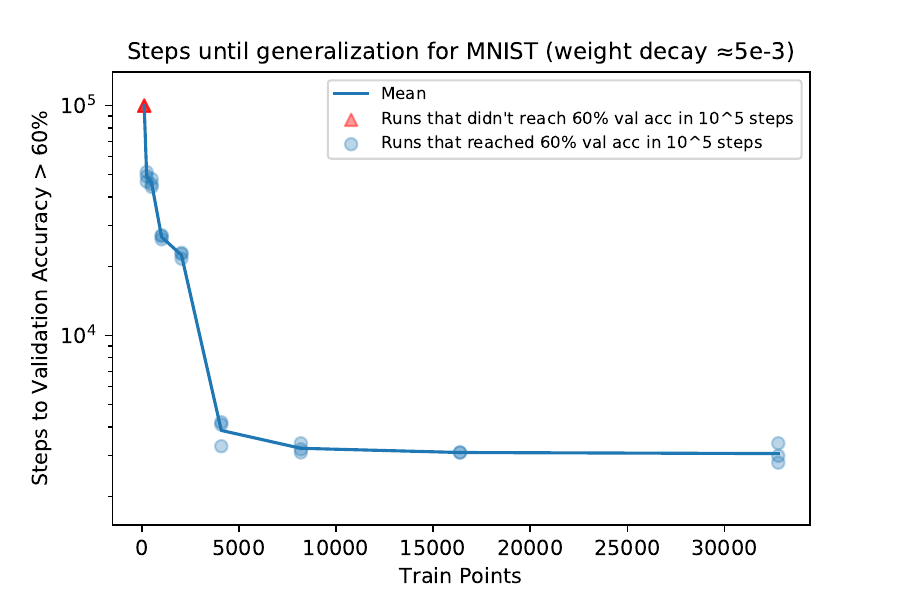}
    \caption{Time to generalize as a function of training set size, on MNIST.}
    \label{fig:generalization-vs-data-mnist}
\end{figure}


\section{Lottery Ticket Hypothesis Connection}
\label{app:lth}
\begin{figure}[ht]
  \centering
  \includegraphics[width=1\linewidth]{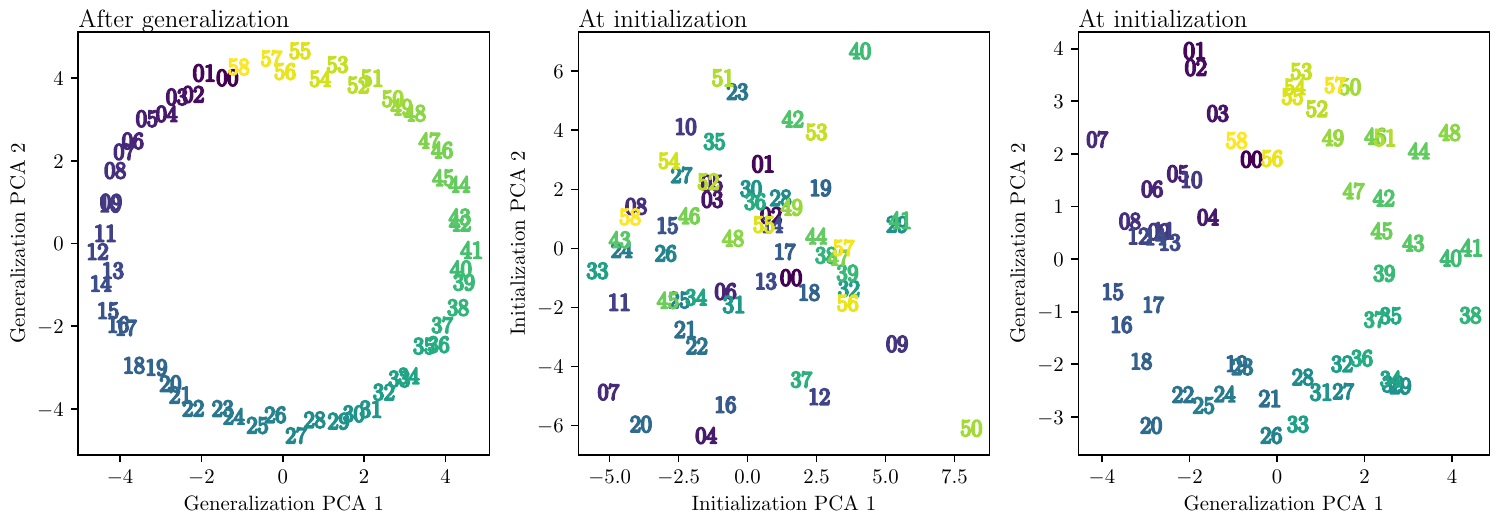}
    \caption{\textbf{(Left)} Input embeddings after generalization projected on their first 2 principal components.\textbf{(Center)} Input embeddings at initialization projected on their first 2 principal components. \textbf{(Right)} Input embeddings at initialization projected on the first 2 principal components of the embeddings after generalization at the end of training (same PCA as the left figure).}  
    \label{fig:pca_lth}
\end{figure}

In Figure \ref{fig:pca_lth}, we show the projection of the learned embeddings after generalization to their first two principal components. Compared to the projection at initialization, structure clearly emerges in embedding space when the neural network is able to generalize ($>99\%$ validation accuracy). What is intriguing is that the projection of the embeddings at initialization to the principal components of the embeddings at generalization seem to already contain much of that structure. In this sense, the structured representation necessary for generalization already existed (partially) at initialization. The training procedure essentially prunes other unnecessary dimensions and forms the required parallelograms for generalization. This is a nonstandard interpretation of the lottery ticket hypothesis where the winning tickets are not weights or subnetworks but instead particular axes or linear combinations of the weights (the learned embeddings).

\begin{figure}[!ht]
    \begin{subfigure}{1\textwidth}
        \includegraphics[width=1\linewidth]{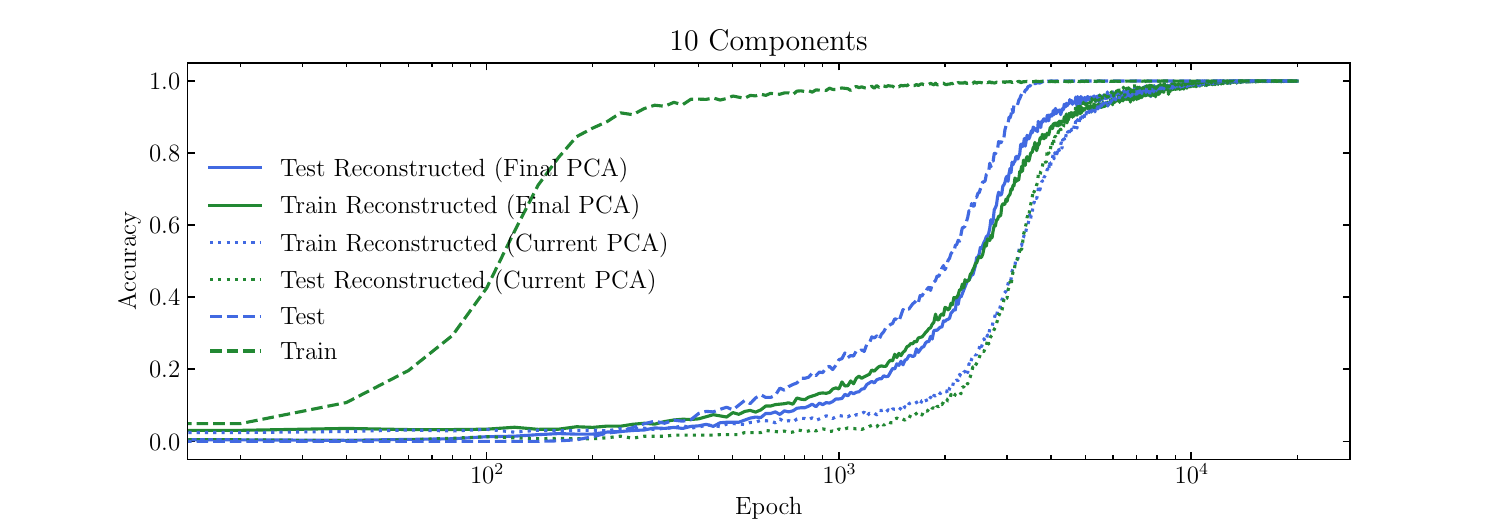}
    \end{subfigure}
    \begin{subfigure}{1\textwidth}
        \includegraphics[width=1\linewidth]{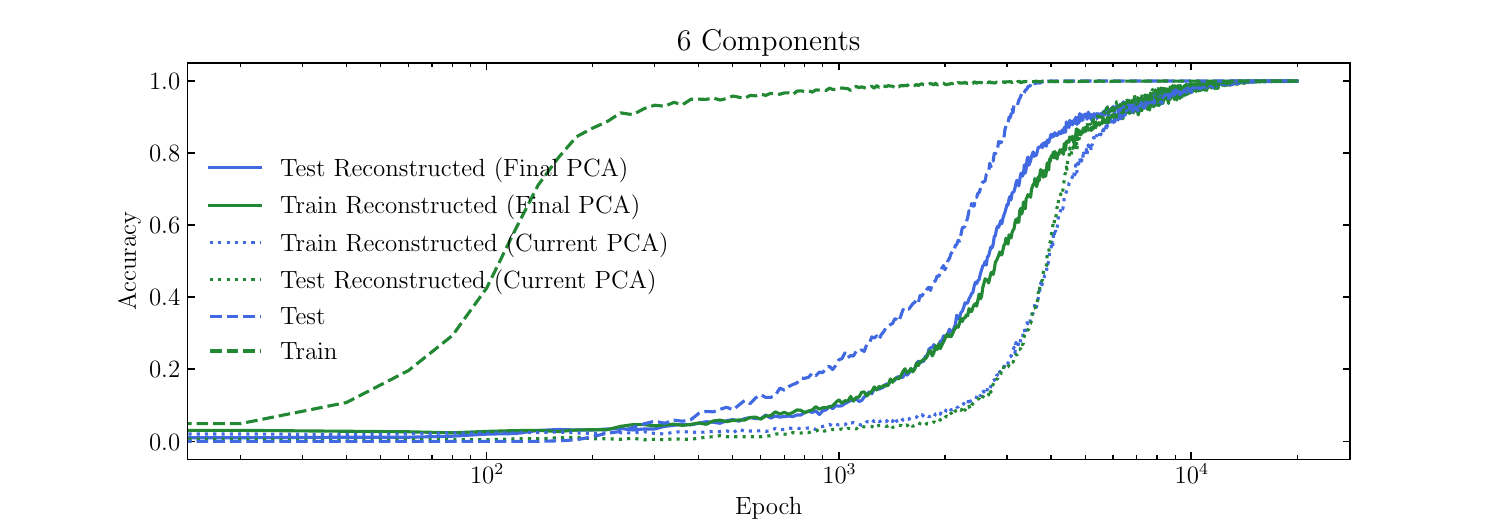}
    \end{subfigure}
    \begin{subfigure}{1\textwidth}
        \includegraphics[width=1\linewidth]{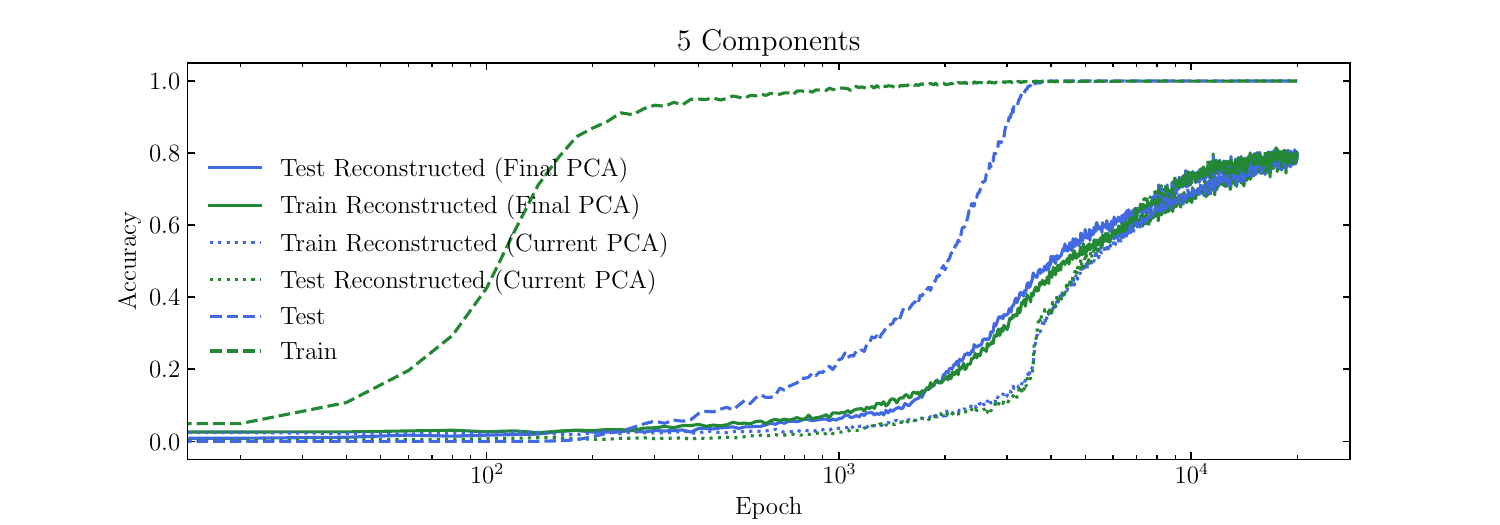}
    \begin{subfigure}{1\textwidth}
        \includegraphics[width=1\linewidth]{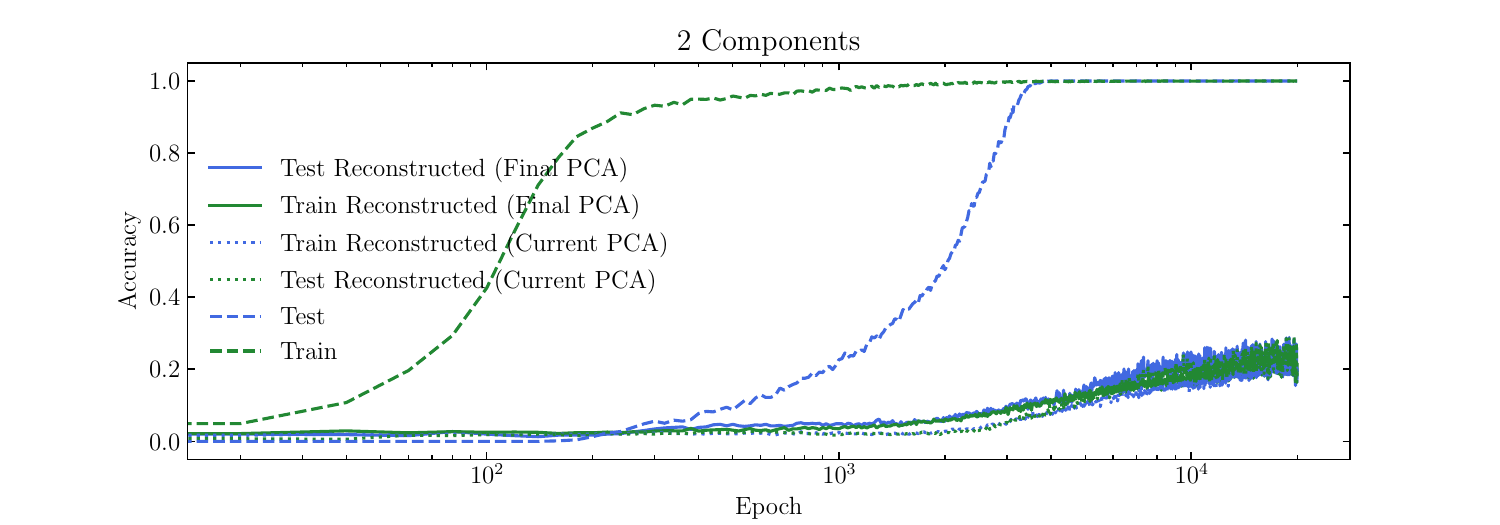}
    \end{subfigure}
    \end{subfigure}
  \centering
     \caption{Train and test accuracy computed while using actual embeddings (dashed line) and embeddings projected onto and reconstructed from their first $n$ principal components (dotted lines) and, finally, using embeddings projected onto and reconstructed from the first $n$ PCs of the embeddings at the end of training (solid lines).} 
    \label{fig:acc_lth}
\end{figure}

In Figure \ref{fig:acc_lth}, we show the original training curves (dashed lines). In solid lines, we recompute accuracy with models which use embeddings that are projected onto the $n$ principal components of the embeddings at the end of training (and back). Clearly, the first few principal components contain enough information to reach $99\%$ accuracy. The first few PCs explain the most variance by definition, however, we note that this is not necessarily the main reason for why they can generalize so well. In fact, embeddings reconstructed from the PCA at the end of training (solid lines) perform better than current highest variance axes (dotted line).
This behavior is consistent across seeds.

\newpage
\section{Derivation of the effective loss}
In this section, we will further motivate the use of our effective loss to study the dynamics of representation learning by deriving it from the gradient flow dynamics on the actual MSE loss in linear regression. The loss landscape of a neural network is in general nonlinear, but the linear case may shed some light on how the effective loss can be derived from actual loss. For a sample $\mat{r}$ (which is the sum of two embeddings $\mat{E}_i$ and $\mat{E}_j$), the prediction of the linear network is $D(\mat{r})= \mat{A}\mat{r}+\mat{b}$. The loss function is ($\mat{y}$ is its corresponding label):
\begin{equation}\label{eq:loss}
    \ell = \underbrace{\frac{1}{2}|\mat{A}\mat{r}+\mat{b}-\mat{y}|^2}_{\ell_{\rm pred}}+\underbrace{\frac{\gamma}{2}||\mat{A}||_F^2}_{\ell_{\rm reg}},
\end{equation}
where the first and the second term are prediction error and regularization, respectively. Both the model $(\mat{A},\mat{b})$ and the input $\mat{r}$ are updated via gradient flow, with learning rate $\eta_A$ and $\eta_x$, respectively:
\begin{equation}
    \frac{d\mat{A}}{dt} = -\eta_A\frac{\partial \ell}{\partial \mat{A}}, \frac{d\mat{b}}{dt} = -\eta_A\frac{\partial \ell}{\partial \mat{b}}, \frac{d\mat{r}}{dt} = -\eta_x\frac{\partial \ell}{\partial \mat{r}}.
\end{equation}
Inserting $\ell$ into the above equations, we obtain the gradient flow:
\begin{equation}
    \begin{aligned}
    &\frac{d\mat{A}}{dt} = - \eta_A\frac{\partial\ell}{\partial\mat{A}} = -\eta_A [\mat{A}(\mat{r}\mat{r}^T+\gamma)+(\mat{b}-\mat{y})\mat{r}^T], \\
    &\frac{d\mat{b}}{dt} = -\eta_A\frac{\partial\ell}{\partial\mat{b}} =  -\eta_A(\mat{A}\mat{r}+\mat{b}-\mat{y}) \\
    &\frac{d\mat{r}}{dt}=-\eta_x\frac{\partial\ell}{\partial\mat{r}}=-\eta_x \mat{A}^T(\mat{A}\mat{r}+\mat{b}-\mat{y}).
    \end{aligned}
\end{equation}
For the $d\mat{b}/dt$ equation, after ignoring the $\mat{A}\mat{r}$ term and set the initial condition $\mat{b}(0)=\mat{0}$, we obtain analytically $\mat{b}(t) = (1-e^{-2\eta_At})\mat{y}$. Inserting this into the first and third equations, we have
\begin{equation}
\begin{aligned}
    &\frac{d\mat{A}}{dt} = -\eta_A[\mat{A}(\mat{r}\mat{r}^T+\gamma)-e^{-2\eta_At}\mat{y}\mat{r}^T], \\
    &\frac{d\mat{r}}{dt} = \underbrace{- \eta_x\mat{A}^T\mat{A}\mat{r}}_{\rm internal\  interaction} + \underbrace{ \eta_xe^{-2\eta_At}\mat{A}^T\mat{y}}_{\rm external\  force}.
\end{aligned}
\end{equation}
For the second equation on the evolution of $d\mat{r}/dt$, we can artificially decompose the right hand side into two terms, based on whether they depend on the label $\mat{y}$. In this way, we call the first term "internal interaction" since it does not depend on $\mat{y}$, while the second term "external force". Note this distinction seems a bit artificial from a mathematical perspective, but it can be conceptually helpful from a physics perspective. We will show below the internal interaction term is important for representations to form.
Because we are interested in how two samples interact, we now consider another sample at $\mat{r}'$, and the evolution becomes
\begin{equation}
\begin{aligned}
    &\frac{d\mat{A}}{dt} = -\eta_A[\mat{A}(\mat{r}\mat{r}^T+\mat{r}'\mat{r}'^T+2\gamma)-e^{-2\eta_At}\mat{y}(\mat{r}+\mat{r}')^T], \\
    &\frac{d\mat{r}}{dt} = - \eta_x\mat{A}^T\mat{A}\mat{r} + \eta_xe^{-2\eta_At}\mat{A}^T\mat{y}, \\
    & \frac{d\mat{r}'}{dt} = - \eta_x\mat{A}^T\mat{A}\mat{r}' + \eta_xe^{-2\eta_At}\mat{A}^T\mat{y}. \\
\end{aligned}
\end{equation}
Subtracting $d\mat{r}/dt$ by $d\mat{r}'/dt$ and setting $\mat{r}'=-\mat{r}$, the above equations further simply to
\begin{equation}\label{eq:Ar}
    \begin{aligned}
    &\frac{d\mat{A}}{dt} = -2\eta_A\mat{A}(\mat{r}\mat{r}^T+\gamma), \\
    &\frac{d\mat{r}}{dt} = - \eta_x\mat{A}^T\mat{A}\mat{r}. \\
    \end{aligned}
\end{equation}

The second equation implies that the pair of samples interact via a quadratic potential $U(\mat{r})=\frac{1}{2}\mat{r}^T\mat{A}^T\mat{A}\mat{r}$, leading to a linear attractive force $f(r)\propto r$. We now consider the adiabatic limit where $\eta_A\to 0$.

{\bf The adiabatic limit} Using the standard initialization (e.g., Xavier initialization) of neural networks, we have $\mat{A}_0^T\mat{A}_0\approx\mat{I}$. As a result, the quadratic potential becomes $U(\mat{r})=\frac{1}{2}\mat{r}^T\mat{r}$, which is time-independent because $\eta_A\to 0$. We are now in the position to analyze the addition problem. For two samples $\mat{x}^{(1)}=\mat{E}_i+\mat{E}_j$ and $\mat{x}^{(2)}=\mat{E}_m+\mat{E}_n$ with the same label ($i+j=m+n$), they contribute to an interaction term 
\begin{equation}
    U(i,j,m,n) = \frac{1}{2} |\mat{E}_i+\mat{E}_j-\mat{E}_m-\mat{E}_n|_2^2.
\end{equation}
Averaging over all possible quadruples in the training dataset $D$, the total energy of the system is
\begin{equation}\label{eq:unnorm_H}
    \ell_0 = \sum_{(i,j,m,n)\in P_0(D)} \frac{1}{2}|\mat{E}_i+\mat{E}_j-\mat{E}_m-\mat{E}_n|_2^2/|P_0(D)|,
\end{equation}
where $P_0(D)=\{(i,j,m,n)|i+j=m+n,(i,j)\in D, (m,n)\in D\}$. To make it scale-invariant, we define the normalized Hamiltonian Eq.~(\ref{eq:unnorm_H}) as
\begin{equation}\label{eq:norm_H}
    \ell_{\rm eff} = \frac{\ell_0}{Z_0},\quad Z_0 = \sum_{i} |\mat{E}_i|_2^2
\end{equation}
which is the effective loss we used in Section \ref{sec:effective_theory}.

\end{document}